\definecolor{rightblue}{RGB}{76,114,176} %#4C72B0
\definecolor{rightorange}{RGB}{221,132,82} %#DD8452
\definecolor{aliceblue}{rgb}{0.94, 0.97, 1.0} % boxes
\definecolor{darkcerulean}{rgb}{0.03, 0.27, 0.49} % citation
\definecolor{iris}{rgb}{0.35, 0.31, 0.81} % URL
\definecolor{carmine}{rgb}{0.59, 0.0, 0.09} % toy example
\definecolor{green(munsell)}{rgb}{0.0, 0.66, 0.47} % toy example
\definecolor{celadon}{rgb}{0.67, 0.88, 0.69} % table
\definecolor{bluerow}{rgb}{0.0, 0.53, 0.74} % for table
\definecolor{lightorange}{RGB}{255, 219, 187} %#DD8452
\definecolor{lavenderblue}{rgb}{0.8, 0.8, 1.0}
\definecolor{blue(pigment)}{rgb}{0.2, 0.2, 0.6}
\definecolor{blue-violet}{rgb}{0.54, 0.17, 0.89}
\def\thick{0.8} % Thickness of vertical lines 9toprule/midrule/bottomrule)
\def\thick{0.8}
\definecolor{mygreen}{HTML}{5a9027}
\newcommand{\Loss}{\mathcal{L}}
\newcommand{\fm}{f_{\mathrm{FM}}}
\newcommand{\enc}{\mathrm{enc}}
\newcommand{\dec}{\mathrm{dec}}
\newcommand{\moment}{\texttt{Moment} }
\newcommand{\moirai}{\texttt{Moirai}\xspace}
\newcommand{\chronos}{\texttt{Chronos}\xspace}
\newcommand{\pca}{\texttt{PCA}\xspace}
\newcommand{\vae}{\texttt{VAE}\xspace}
\newcommand{\adapts}{\textbf{\texttt{AdaPTS}}\xspace}
\newcommand{\iid}{$\mathrm{iid}$\xspace}
\newcommand{\RR}{\mathbb{R}}
\newcommand{\mbf}[1]{\mathbf{#1}}
\newcommand{\bA}{\mathbf{A}}
\newcommand{\bX}{\mathbf{X}}
\newcommand{\bW}{\mathbf{W}}
\newcommand{\bY}{\mathbf{Y}}
\newcommand{\bZ}{\mathbf{Z}}
\newcommand{\vx}{{\bf x}}
\newcommand{\vy}{{\bf y}}
\newcommand{\vb}{{\bf b}}
\newcommand{\bB}{{\bf B}}
\newcommand{\RNum}[1]{\uppercase\expandafter{\romannumeral #1\relax}}
\newcommand{\norm}[1]{\left\lVert#1\right\rVert}
\DeclareMathOperator*{\argmin}{arg\,min}
\theoremstyle{plain}
\newtheorem{theorem}{Theorem}[section]
\newtheorem{proposition}[theorem]{Proposition}
\theoremstyle{definition}
\newtheorem{definition}[theorem]{Definition}
\newtheorem{assumption}[theorem]{Assumption}
\theoremstyle{remark}
\newtheorem{remark}[theorem]{Remark}
\icmltitlerunning{\adapts: Adapting Univariate Foundation Models to Probabilistic Multivariate Time Series Forecasting}
\newcommand\rebuttal[1]{#1}
\begin{document}

\addtocontents{toc}{\protect\setcounter{tocdepth}{0}}

\twocolumn[
\icmltitle{\adapts: Adapting Univariate Foundation Models to Probabilistic Multivariate Time Series Forecasting}

\begin{icmlauthorlist}
\icmlauthor{Abdelhakim Benechehab}{huawei,eurecom}
\icmlauthor{Vasilii Feofanov}{huawei}
\icmlauthor{Giuseppe Paolo}{huawei}
\icmlauthor{Albert Thomas}{huawei}
\icmlauthor{Maurizio Filippone}{kaust}
\icmlauthor{Bal\'{a}zs K\'{e}gl}{huawei}
\end{icmlauthorlist}

\icmlaffiliation{huawei}{Huawei Noah's Ark Lab, Paris, France}
\icmlaffiliation{eurecom}{Department of Data Science, EURECOM}
\icmlaffiliation{kaust}{Statistics Program, KAUST}

\icmlcorrespondingauthor{Abdelhakim Benechehab}{abdelhakim.benechehab@gmail.com}

\vskip 0.3in
]

\printAffiliationsAndNotice{} 

\begin{abstract}
Pre-trained foundation models (FMs) have shown exceptional performance in univariate time series forecasting tasks. However, several practical challenges persist, including managing intricate dependencies among features and quantifying uncertainty in predictions. This study aims to tackle these critical limitations by introducing \textbf{adapters}—feature-space transformations that facilitate the effective use of pre-trained univariate time series FMs for multivariate tasks. Adapters operate by projecting multivariate inputs into a suitable latent space and applying the FM independently to each dimension. Inspired by the literature on representation learning and partially stochastic Bayesian neural networks, we present a range of adapters and optimization/inference strategies. Experiments conducted on both synthetic and real-world datasets confirm the efficacy of adapters, demonstrating substantial enhancements in forecasting accuracy and uncertainty quantification compared to baseline methods. Our framework, \textbf{AdaPTS}, positions adapters as a modular, scalable, and effective solution for leveraging time series FMs in multivariate contexts, thereby promoting their wider adoption in real-world applications. We release the code at \href{https://github.com/abenechehab/AdaPTS}{https://github.com/abenechehab/AdaPTS}.
\end{abstract}

\section{Introduction}
\label{sec:intro}

\begin{figure}[t]
     \centering
     \begin{subfigure}[b]{\columnwidth}
         \centering
        \includegraphics[width=\textwidth]{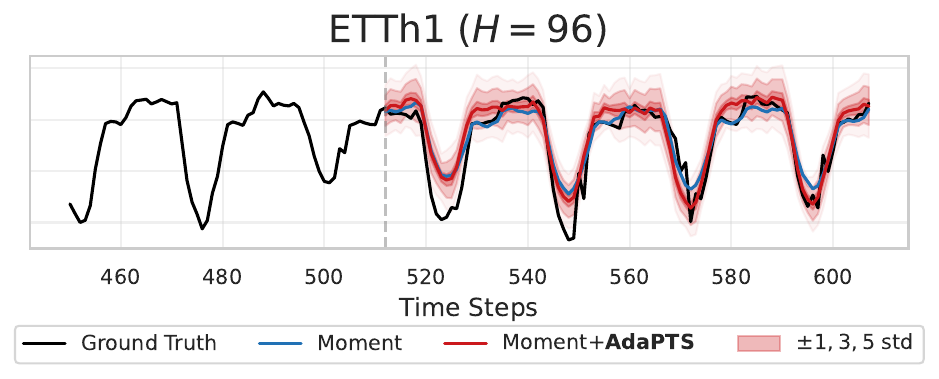}
         \caption{}
         \label{fig:traj}
     \end{subfigure}
     \vfill
     \begin{subfigure}[b]{\columnwidth}
         \centering
        \includegraphics[width=\textwidth]{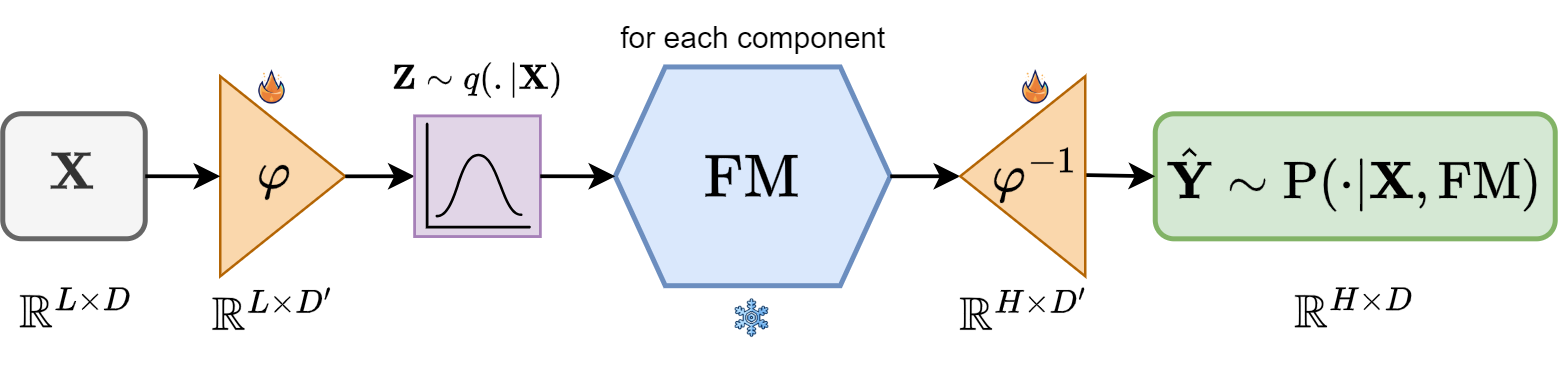}
         \caption{}
         \label{fig:pipeline}
     \end{subfigure}
\caption{(a) Augmenting $\moment$ time series foundation model with the \adapts framework provides \emph{probabilistic} and \emph{more accurate} predictions. (b) \textbf{The \adapts framework:} The input time series is transformed through a feature space transformation $\varphi$ that maps into a stochastic latent space. The prediction is then conducted using a pre-trained FM before transforming back the predicted, now distribution, to the original feature space. The fire symbol indicate trainable weights while the snowflake implicates that the parameters of the FM are kept frozen.
}
\label{fig:main}
\end{figure}

Time series forecasting is a well-established machine learning problem that involves analyzing sequential data to predict future trends based on historical patterns. Two key challenges frequently arise in this context: (a) time series are often multivariate, incorporating multiple descriptive features~\citep{wei2019multivariate}, and (b) estimating the uncertainty of a forecast is equally important, requiring probabilistic model outputs~\citep{gneiting2014probabilistic}. These challenges are particularly relevant in real-world applications where risk assessment depends on reliable forecasts, such as healthcare~\citep{jones2012improved}, finance~\citep{groen2013real}, energy management~\citep{zhang2014review,nowotarski2018recent}, and weather prediction~\citep{palmer2012towards,bi2023accurate}.  

Existing foundation models (FMs) for time series forecasting, such as \chronos~\citep{ansari2024chronos}, are typically trained for univariate forecasting tasks due to tractability constraints, as the wide range of real world time series problems typically have different numbers of features. Even without discretization, handling multivariate time series directly within these models (\moment~\citep{goswami2024moment}, \moirai~\citep{liu2024moirai}) remains computationally challenging due to the high-dimensional dependencies among features. This limitation raises a fundamental question: how can we leverage existing pre-trained univariate FMs to enable probabilistic forecasting for multivariate time series?  

To address this, we introduce \adapts, a novel framework designed to augment FMs with probabilistic adapters. As illustrated in Figure~\ref{fig:main}, \adapts applies a stochastic feature transformation that maps the input time series into a latent space, where predictions are made using the frozen FM. Our framework sets itself apart from existing literature by enforcing an invertibility constraint on the adapter, allowing predictions to be transformed back into the original feature space. Beyond enhancing forecasting accuracy, the integration of stochasticity into the adapter's latent representation ensures that the model captures uncertainty, thereby improving both calibration and robustness.

Our approach leads to several novel insights and contributions, which we summarize as follows:  

\begin{enumerate}
    \item \textbf{Multivariate FM adaptation.} We introduce a principled methodology for adapting existing pre-trained univariate FMs to multivariate probabilistic forecasting, resulting in the \adapts framework. 
    \item \textbf{Theoretical foundations of adapters.} We provide a theoretical analysis to support the necessity of adapters, starting with the analytically tractable case of linear adapters and linear FMs. We then build on the literature on partially stochastic Bayesian neural networks to introduce probabilistic adapters.  
    \item \textbf{Empirical validation.} We conduct extensive experiments on multivariate time series forecasting benchmarks, demonstrating that our approach improves forecasting accuracy baseline methods. We also analyze the interpretability of the learned latent representation and show that adapters enable cost-effective adaptation by reducing the dimensionality of the feature space. 
\end{enumerate}

The rest of this paper is organized as follows: ~\cref{sec:related} discusses related work, ~\cref{sec:adapters} details the problem setup and the theoretical analysis on linear adapters. ~\cref{sec:prob_adapters} extends our framework to probabilistic adapters, and ~\cref{sec:exps} showcases experimental results. Finally, we conclude with limitations and future directions in ~\cref{sec:discussion}.

\section{Related work}
\label{sec:related}

\noindent\textbf{Time Series Foundational Models.} Over the past two years, a plethora of foundation models have been proposed with a particular focus on time series forecasting. Some of these models like $\texttt{GPT4TS}$~\citep{zhou2023onefitsall} and $\texttt{Time-LLM}$~\citep{jin2023time} \rebuttal{``reprogram''} a Large Language Model to the forecasting setting by freezing most of its layers and fine-tuning additional time series-specific modules to a new downstream task. The majority of these time series FMs including \texttt{Lag-Llama}~\citep{rasul2024lagllama}, $\chronos$~\citep{ansari2024chronos}, $\moirai$~\citep{liu2024moirai}, \texttt{TimesFM} \citep{das2024a} and \moment~\citep{goswami2024moment} are trained from scratch on a large volume of time series data.

\noindent\textbf{Adapters.} The multivariate setting presents a significant challenge for time series FMs, as different tasks involve varying numbers of channels\footnote{Throughout the paper, the words \emph{features}, \emph{channels}, and \emph{components} are used interchangeably to refer to the number of variates in a multivariate time series, represented as $D$ in \cref{sec:adapters}.}. To the best of our knowledge, the only model that naturally accommodates any number of channels is $\moirai$~\citep{liu2024moirai}, which, however, suffers from high computational demand due to processing all channels flattened in the transformer simultaneously, leading to a quadratic memory complexity w.r.t. to the number of channels. Most foundation models, instead, treat each one of these independently, which, as noted by~\citet{feofanov2024adapters}, remains computationally expensive when full fine-tuning is required. For classification tasks with hundreds or thousands of features, they demonstrated that simple adapters like the rotation matrix obtained through Principal Components Analysis ($\pca$) mitigate this issue. At the same time,~\citet{benechehab2025zeroshot} showed that $\pca$ preserves channel interactions by learning new disentangled components. However, in both cases, $\pca$ provided little improvement over independent processing, leaving room for further enhancements. In the context of tabular regression, foundation models such as \citep[$\texttt{TabDPT}$]{ma2024tabdptscalingtabularfoundation} also use \pca to adapt to a variable number of features. 

Less related to our work, \citet{cheng2016_gpadapter} use a Gaussian process adapter in the context of irregular time series classification. In other domains, adapters have been used for multimodal (text-time series) representation learning~\citep{zhang2024dualtimedualadaptermultimodallanguage} and computer vision~\citep{LI2025, yin2023adapterneedtuningvisual, pan2022}.

% \noindent\textbf{Multivariate time series representation learning.} [fill here].

% \begin{itemize}
%     \item \vasilii{GPT4TS\citep{zhou2023onefitsall} leverages a pre-trained LLM (GPT-2 in their case) for diverse time-series tasks including forecasting. Their approach handles the multivariate setting through the tokenization block, which has to be trained for each task similarly to adapters. \href{https://github.com/DAMO-DI-ML/One_Fits_All}{code}.}
%     \item \citep[\textbf{TIME-LLM}]{jin2023time}. Introduces "reprogramming" large language models for time series, using pre-prompts and a frozen LLM, while training a time series-specific module.

%     \item \citep[\textbf{Lag-Llama}]{rasul2024lagllama}. Presents a foundation model with 2.45M parameters optimized for probabilistic time series forecasting.

%     \item \citep[\textbf{MOMENT}]{goswami2024moment}. Provides open-source time series models with parameter sizes of 37.9M, 113M, and 346M.

%     \item \citep[\textbf{TimesFM}]{das2024a}. A 200M parameter model by Google, focusing on decoder-only architecture for time series forecasting.
    
%     \item \citep[\textbf{UniTS}]{gao2024building}. .
    
% \end{itemize}

\section{Adapters}
\label{sec:adapters}

\subsection{Problem setup}

Consider a multivariate long-term time series forecasting task, represented by: a data matrix $\bX \in \RR^{L \times D}$ where $L$ is the context window size and $D$ is the multivariate time series dimensionality, and a target matrix $\bY \in \RR^{H \times D}$, where $H$ is the forecasting horizon. 
We denote by $\vx_d \in \RR^{L \times 1}$ (respectively $\vy_d \in \RR^{H \times 1}$) the $d$-th component of the input (respectively target) multivariate time series.

Our goal is to use a frozen pre-trained univariate time series foundation model denoted as $\fm: \RR^{L \times 1} \to \RR^{H \times 1}$ (\cref{fig:pipeline}) and exploit the information stored in its weights to achieve the best forecasting performance, measured by the mean squared error (MSE) loss:

\begin{equation}
\label{eq:metric}
\Loss = \| \bY - \fm(\bX)\|_{\mathrm{F}}^2
\end{equation}

On multivariate time series, for simplicity, we denote by $\fm(\bX)$ the application of $\fm$ to each channel independently, in which case the loss can be written as: $\frac{1}{D} \sum_{d=1}^D \|\vy_d - \fm(\vx_d)\|_2^2$.

We now formally define an adapter, a tool by means of which we aim to best use the foundation model $\fm$ for multivariate forecasting:

\begin{definition}[adapter]
\label{def:adapter}

An adapter is a feature-space transformation $\varphi: \RR^{D} \to \RR^{D'}$ that is applied to the data prior to the foundation model\footnote{In practice, $\varphi$ is applied on matrices $\bX$ in $\RR^{L \times D}$. This denotes the application of $\varphi$ on each row of $\bX$.}. The forecast is then obtained by transforming the predictions back to the original feature space:
\[\hat{\bY}(\bX; \varphi) = \varphi^{-1} \big( \fm(\varphi(\bX)) \big)\]

\end{definition}

According to this definition, an adapter is valid only if the inverse transformation $\varphi^{-1}: \RR^{D'} \to \RR^D$, such that $\forall \vx \in \RR^D$, $\varphi^{-1} \circ \varphi (\vx) = \vx$, is well-defined on $\RR^{D'}$. In the rest of the paper, we relax this condition by naming the direct transformation as \emph{encoder} ($\varphi \triangleq \enc$), and respectively, the inverse transformation as \emph{decoder} ($\varphi^{-1} \triangleq \dec$). In this case, the predictions obtained after the application of the adapter become: $\hat{\bY}(\bX; \enc, \dec) = \dec \big( \fm(\enc(\bX)) \big)$.

We note that in the literature, there exist alternatives to adapt to the multivariate setting \citep{zhang2023crossformer,zhou2023onefitsall}, but we have chosen this family of adapters due to their high flexibility as: (a) any foundation model can be plugged-in, (b) no requirement of fine-tuning due to feature-level transformations~\citep{feofanov2024adapters}, (c) adaptation to the computation budget by defining the number of encoded channels.

\noindent\textbf{Optimality of an adapter.} In order for an adapter to be useful, it has to achieve a lower forecasting error than the identity baseline. In fact, the loss defined in \cref{eq:metric} corresponds to the forecasting loss obtained by using an adapter implementing the identity matrix $\mathbf{I}$. Therefore, we define the optimality of the adapter based on improving the forecasting error of the identity baseline:

\[ \Loss \geq \Loss(\varphi) = \|\bY - \varphi^{-1} \big( \fm(\varphi(\bX)) \big)\|_{\mathrm{F}}^2 \]

\subsection{Theoretical analysis}
\label{subsec:theory}

The purpose of this section is to study the optimization problem that the adapter $\varphi$ is aiming to solve:

\begin{equation}
\label{eq:prob1}
    \varphi^* = \argmin_\varphi \|\bY - \varphi^{-1} \big( \fm(\varphi(\bX)) \big)\|_{\mathrm{F}}^2
\end{equation}

Under mild assumptions on the adapter function class and the backbone foundation model $\fm$, we aim at characterizing the optimal solution $\varphi^*$ and prove that it realizes the optimality condition: $\Loss(\varphi^*) \leq \Loss$.

We first consider the linear case where we constrain the adapter $\varphi$ to the class of linear transformations, parametrized by a matrix $\bW_{\varphi} \in \RR^{D \times D}$: $\varphi(\bX) = \bX \bW_{\varphi}$.

\begin{assumption}
\label{ass:invertible}
$\bW_{\varphi}$ has full rank: $\text{rank}(\bW_{\varphi}) = D$, insuring its invertibility.
\end{assumption}

\begin{assumption}
\label{ass:fm}
For ease of derivation, we consider a similar linear parametrization for the foundation model: $\fm(\bX) =  \bW_{\text{FM}}^\top \bX + \vb_{\text{FM}} \mbf{1}^\top$ where $\bW_{\text{FM}} \in \RR^{L \times H}$, $\vb_{\text{FM}} \in \RR^H$, and $\mbf{1}$ a vector of ones of dimension $D$.
\end{assumption}

\begin{proposition}[Optimal linear adapter]
\label{prop:solution}
Under \cref{ass:invertible} and \cref{ass:fm}, the \emph{closed-form} solution of the problem: 

\begin{equation}
\label{eq:prob2}
    \Loss (\bW_{\varphi}) = \|\bY - \big( \bW_{\text{FM}}^\top \bX \bW_{\varphi} + \vb_{\text{FM}} \mbf{1}^\top \big) \bW_{\varphi}^{-1}\|_F^2
\end{equation}

writes as:

\begin{equation}
\label{eq:solution}
    \bW_{\varphi}^* = (\bB^\top \bA)^{+} \bB^\top \bB
\end{equation}

where $\bW_{\varphi}^* = \argmin_{\bW_{\varphi} \in \mathcal{GL}_D(\RR)} \Loss (\bW_{\varphi})$, $\bA = \bY -  \bW_{FM}^\top \bX$, $\bB = \vb_{FM} \mbf{1}^\top$, and $(\bB^{\top} \bA)^{+}$ denoting the \emph{pseudo-inverse} operator.

\end{proposition}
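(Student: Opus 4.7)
The plan is to reduce the loss to a standard matrix least-squares form and then apply the first-order optimality condition, solving for $\bW_\varphi$ via the Moore--Penrose pseudo-inverse. The first step is algebraic simplification: under \cref{ass:fm}, plug the linear form of $\fm$ into the loss and use the fact that $(\bW_{\mrm{FM}}^\top \bX \bW_\varphi)\bW_\varphi^{-1} = \bW_{\mrm{FM}}^\top \bX$ by \cref{ass:invertible}. This collapses the predicted term to $\bW_{\mrm{FM}}^\top \bX + \vb_{\mrm{FM}}\mbf{1}^\top\bW_\varphi^{-1}$, so that $\Loss(\bW_\varphi) = \|\bA - \bB\,\bW_\varphi^{-1}\|_F^2$ with $\bA = \bY - \bW_{\mrm{FM}}^\top \bX$ and $\bB = \vb_{\mrm{FM}}\mbf{1}^\top$. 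This rewriting is the clean observation that makes the problem tractable: the bias term is the only part actually coupled to the adapter.

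Next I would differentiate. Viewing $\bV := \bW_\varphi^{-1}$ as a free matrix variable, $\Loss$ is quadratic in $\bV$, and the matrix calculus identity $d\|\bA - \bB\bV\|_F^2 = -2\,\mrm{tr}\bigl(\bB^\top(\bA-\bB\bV)\,d\bV\bigr)$ yields the normal equation
\begin{equation*}
\bB^\top \bB\, \bW_\varphi^{-1} \;=\; \bB^\top \bA.
\end{equation*}
Right-multiplying by $\bW_\varphi$ turns this into a linear equation in $\bW_\varphi$ itself:
\begin{equation*}
\bB^\top \bA\, \bW_\varphi \;=\; \bB^\top \bB.
\end{equation*}
Applying the Moore--Penrose pseudo-inverse of $\bB^\top\bA$ on the left then produces the claimed closed form $\bW_\varphi^* = (\bB^\top\bA)^+\,\bB^\top\bB$. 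To check admissibility, one verifies that this $\bW_\varphi^*$ is indeed a minimizer by plugging back into the normal equation and using the defining identity $M M^+ M = M$ with $M=\bB^\top\bA$.

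The main obstacle is that $\bB = \vb_{\mrm{FM}}\mbf{1}^\top$ has rank one, hence $\bB^\top\bB$ and $\bB^\top\bA$ are rank-deficient, which is why the pseudo-inverse appears and why the linear equation does not have a unique solution. I would argue that the set of optima for the unconstrained problem forms an affine subspace (parametrised by vectors in the kernel of $\bB^\top$), that the given expression is the minimum-norm element of this subspace, and that the minimum is attained inside $\mathcal{GL}_D(\RR)$ by a suitable perturbation of $\bW_\varphi^*$ along directions that do not change $\bB\bW_\varphi^{-1}$. Finally, since the identity adapter $\bW_\varphi = \bI$ is one feasible point while $\bW_\varphi^*$ is an optimum of the minimization, the optimality condition $\Loss(\bW_\varphi^*) \leq \Loss$ follows immediately, completing the statement.
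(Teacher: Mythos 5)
Your proof takes essentially the same route as the paper's: reduce the loss to $\|\bA - \bB\,\bW_\varphi^{-1}\|_F^2$, derive the normal equation $\bB^\top\bB\,\bW_\varphi^{-1} = \bB^\top\bA$, right-multiply by $\bW_\varphi$, and invert with the pseudo-inverse. Your reparametrization in $\bV = \bW_\varphi^{-1}$ is a slight streamlining of the paper's chain-rule step (and, incidentally, avoids the paper's dubious appeal to convexity in $\bW_\varphi$, since the loss is genuinely convex only as a function of $\bV$); your closing remarks on the rank-one structure of $\bB^\top\bA$, the affine solution set, and the need to perturb $\bW_\varphi^*$ into $\mathcal{GL}_D(\RR)$ correspond to the observation the paper relegates to the remark following the proposition.
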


\begin{proof}
    The result follows by differentiating $\Loss (\bW_{\varphi})$ with respect to $\bW_{\varphi}$, and solving the Euler equation: $\nabla_{\bW_{\varphi}} \Loss = 0$. The detailed proof is deferred to \cref{appendix:theory}.
\end{proof}

\begin{remark}
In this case, the fact that the matrix $\bB = \vb_{FM} \mbf{1}^\top$ have identical columns renders the matrix $\bB^\top \bA$ degenerate (with $rank(\bB^\top \bA) = 1$). In practice, we add a positive constant to the diagonal in order to numerically stabilize the matrix inversion: $\bW_{\varphi}^* = (\bB^\top \bA + \lambda \mbf{I})^{-1} \bB^\top \bB$, with $\lambda > 0$. In \cref{subsec:motiv} we show that we are able to reach an optimal solution regardless of this added regularization.
\end{remark}

\subsection{Working example}
\label{subsec:motiv}

\noindent\textbf{Synthetic data.} \rebuttal{Our synthetic dataset comprises two multivariate time series, one with several independent components and the other with linearly correlated ones (\cref{fig:linear_synthetic})}, designed to evaluate a linear feature-space transformation. The data generation process creates five (\emph{uncorrelated}) base signals—sinusoids with distinct frequencies, amplitudes, and \rebuttal{\iid} noise— and derives eight additional channels through linear combinations of these bases with additive Gaussian noise of different magnitude ($\sigma \in (0.1, 0.2, 0.5)$). This construction provides a controlled environment where the ground truth relationship between channels is known: the underlying data manifold is effectively five-dimensional, \rebuttal{but in the correlated case} the observed eight-dimensional multivariate time series includes varying levels of noise and linear mixing. 

\noindent\textbf{Randomly generated linear FMs.} The experimental setup in \cref{fig:linear_synthetic} consists in randomly sampling the linear parameters of a toy foundation model: $\bW_{\text{FM}}$ and $b_{\text{FM}}$. To simulate a realistic scenario, we use Glorot-uniform initialization distribution as it would be the case in neural network-based architectures. We then compute the closed-form solution $\bW_{\varphi}^*$ \rebuttal{(\cref{eq:solution})} on raw data $\bX$, and compare the resulting loss value with the baseline (using the identity matrix $\mathbf{I}$ as adapter) and the $\pca$-only adapter. 

\begin{figure}[ht]
\centering
\includegraphics[width=\columnwidth]{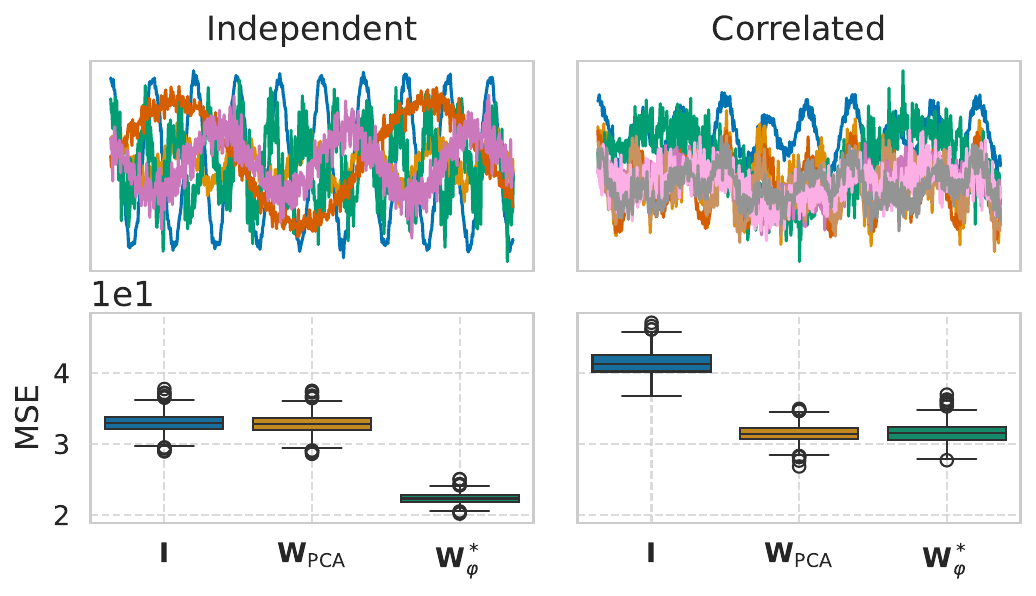}
\caption{\textbf{Optimality of $\bW_{\varphi}^*$}. Comparing the MSE obtained with $\bW_{\varphi}^*$ against the baseline, for 1000 randomly generated linear FM.}
\label{fig:linear_synthetic}
\end{figure}

\cref{fig:linear_synthetic} shows that in the case of uncorrelated data (\emph{left} column) $\pca$ is equivalent to the identity matrix, while the solution $\bW_{\varphi}^*$ to the problem $\Loss (\bW_{\varphi})$ reaches an order of magnitude better forecasting loss. In the correlated case, we observe that $\pca$ has a similar performance to the optimal solution. This example motivates the adapter idea through the existence of better linear transformations than the identity matrix in the case of linear foundation models.

\section{\adapts: Adapters for Probabilistic Multivariate Time Series Forecasting}
\label{sec:prob_adapters}

In this section, we introduce families of adapters that verify the conditions stated in \cref{def:adapter}. Furthermore, we extend this definition to include probabilistic variants of adapters, useful for uncertainty quantification on the FM predictions.

\subsection{Families of adapters}
\label{subsec:families}

\rebuttal{Our framework accommodates various families of transformations that can serve as adapters. Initially, we define linear AutoEncoders and subsequently extend them to their deep non-linear counterparts. Ultimately, we introduce the probabilistic adapter framework, encompassing Variational AutoEncoders (\vae) and $\texttt{Dropout}$-based families of adapters.}

%finally similarly to the experiments in the previous sections. Then we introduce more expressive , and their variational counterparts $\vae$. 
% Following the observations regarding $\pca$ in the linear case, we opt for $\beta$-$\vae$, a $\vae$ variant that favors disentangled representations in the latent space. Finally, we experiment with Normalizing Flows (practically RealNVP), which are a natural choice given the invertibility property of adapters.

\noindent\textbf{Linear AutoEncoders.} \rebuttal{In addition to the linear setup introduced in \cref{subsec:theory},} we extend Linear AutoEncoders to provide a simple yet effective method for dimensionality reduction while preserving the temporal relationships within time series data. In this more general case, the encoder compresses the multivariate time series $ \bX $ into a potentially lower-dimensional representation $ \bZ = \bX \bW_{\theta_{\enc}} $, where $ W \in \mathbb{R}^{D \times D'} $ is the linear transformation matrix, and $D' \leq D$. The decoder reconstructs the forecast to the original feature space after prediction as $ \hat{\bY} = \fm(\bZ) \bW_{\theta_{\dec}} $. Finally, the parameters of the encoder $\theta_{\enc}$ and the decoder $\theta_{\dec}$ are jointly optimized to minimize the objective in \cref{eq:prob1}.

\noindent\textbf{Deep non-linear AutoEncoders.} Deep non-linear AutoEncoders extend their linear counterparts by employing multiple layers of non-linear transformations. The encoder maps the input $ \bX $ to a latent space $ \bZ = \enc(\bX; \theta_{\enc}) $, where $ \enc $ is parameterized by a deep neural network. Similarly, the decoder reconstructs the predictions of the foundation model in the latent space:  $ \hat{\bY} = \dec(\fm(\bZ); \theta_{\dec})$.

Besides AutoEncoders, Normalizing Flows~\citep{Kobyzev_2021} such as \texttt{RealNVP}~\citep{dinh2017densityestimationusingreal} are a valid choice in the context of adapters, thanks to their inherently invertible nature. However, their training may be challenging due to various optimization related concerns. We defer a discussion on Normalizing Flows as adapters to \cref{appendix:flows}.

\subsection{Probabilistic Adapters}

We now discuss an alternative to the optimization of adapters, which is based on a Bayesian treatment of their parameters. 
There are many options on how to carry out inference over these parameters, and we can draw from the literature on Bayesian inference for neural networks \citep{PapamarkouICML24}. 

Considering a FM which yields point predictions, the appeal of Bayesian adapters is that they enable probabilistic predictions, which can be used for uncertainty quantification.
Note that this is the case for models such as \chronos and \moirai, which output a distribution over the time series continuous values\footnote{In the case of \chronos, this distribution is obtained through a categorical distribution (with \emph{softmax} probabilities) over a tokenized space of the time series values.}.
For deterministic FMs such as $\moment$ \citep{goswami2024moment}, a Bayesian treatment of adapters yields an ensemble of such predictions, which is key for accounting for the predictive uncertainty.

\noindent\textbf{Inference.} Recalling that $\theta$ represents the set of parameters of encoder ($\enc_\theta$) and decoder ($\dec_\theta$), we can attempt to obtain the posterior distribution over these parameters through Bayes theorem \cite{Gelman13}:
\begin{equation*}
p(\theta | \bY, \bX) \propto p(\bY | \bX, \theta) p(\theta)
\text{,}
\end{equation*}
where $p(\theta)$ is the prior distribution over the parameters and $p(\bY | \bX, \theta)$ the likelihood, with $\bY, \bX$ representing a training dataset in this context.
Alternatively, we can rather treat the latent representation $\bZ$ as stochastic, where the interest is now to characterize the following posterior: 
\begin{equation*}
p(\bZ | \bY, \bX) \propto p(\bY | \bX, \bZ) p(\bZ) \text{.}
\end{equation*}

In these two formulations, the posterior distribution over the parameters, is instrumental in obtaining predictive distributions useful for uncertainty quantification.
For instance, in the case of inference over $\theta$, 
%by integrating out model parameters. Therefore, 
for new test data $\bY^{*}, \bX^{*}$ we obtain:

\begin{equation*}
p(\bY^{*} | \bX^{*}, \bY, \bX) = \int p(\bY^{*} | \bX^{*}, \theta) p(\theta | \bY, \bX) d\theta \text{.}
\end{equation*}

Characterizing the posterior analytically, however, is intractable and we need to resort to approximations. 
The literature on Bayesian inference offers various strategies, which can be adapted to neural networks \citep{PapamarkouICML24}, including variational inference \citep{Graves11}, Laplace approximations \citep{Yang24}, and Markov chain Monte Carlo (MCMC) \citep{Chen14,Tran22}.

Within the \adapts framework, we focus in particular on variational inference (VI) for \vae adapters and on Monte Carlo dropout \citep{Gal16} as an approximate form of VI for carrying out inference over $\theta$. 
%In practice, instead of being Bayesian about the model parameters \( \theta \), we take a Bayesian perspective on the latent space. This means that we model the uncertainty of the latent representations \( \bZ \) rather than directly over \( \theta \). This perspective naturally leads to Variational AutoEncoders (\vae) as a choice of probabilistic adapters.

\noindent\textbf{Variational AutoEncoders.} Following the Bayesian perspective on adapters, $\vae$ assume a prior distribution over the latent representation $ \bZ $, typically $ \mathcal{N}(0, \mathbf{I}) $. The encoder then outputs parameters of the posterior distribution $ q_\phi(\bZ|\bX) $, and in our context, the decoder generates reconstructions of predictions $\hat{\bY} \sim p_\theta(\bY|\bX,\fm(\bZ)) $ where $\theta$ parametrize a likelihood model $p$. We then define the training objective of the $\vae$, which brings together the forecasting loss and a regularization term, in a similar way to the \emph{evidence lower bound} (ELBO)~\citep{Kingma2013AutoEncodingVB} objective:

\begin{proposition}[$\vae$ adapter training objective]
\label{prop:vae}
The training objective for the $\vae$ adapter is the maximization of an \emph{ELBO}-like lower bound on the marginal likelihood of the target $\bY$:
\begin{equation*}
\begin{split}
\log p_\theta(\bY|\bX, \fm) \geq & \mathbb{E}_{q_\phi(\bZ|\bX)} \left[ \log p_\theta(\bY|\bX, \fm(\bZ)) \right] \\
& - \mathrm{KL}\left(q_\phi(\bZ|\bX) \,\|\, p(\bZ)\right),
\end{split}
\end{equation*}
where $\mathrm{KL}$ denotes the Kullback-Leibler divergence. 
\end{proposition}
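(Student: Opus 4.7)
The plan is to carry out the standard evidence lower bound (ELBO) derivation, adapted to the \adapts setting in which the stochastic latent $\bZ$ is fed through the frozen foundation model $\fm$ before being decoded into a prediction of $\bY$. The only structural difference with the classical \vae derivation of \citet{Kingma2013AutoEncodingVB} is that the likelihood of $\bY$ given $\bZ$ is mediated by the fixed mapping $\fm$, so that one conditions on $\fm(\bZ)$ rather than directly on $\bZ$; since $\fm$ is deterministic and frozen, this does not affect any of the probabilistic manipulations.

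First, I would write the marginal likelihood by integrating out the latent $\bZ$ against its prior $p(\bZ) = \mathcal{N}(0,\mathbf{I})$,
\begin{equation*}
p_\theta(\bY \mid \bX, \fm) = \int p_\theta(\bY \mid \bX, \fm(\bZ))\, p(\bZ)\, d\bZ,
\end{equation*}
and then multiply and divide inside the integral by the variational posterior $q_\phi(\bZ \mid \bX)$, so that the integral becomes an expectation under $q_\phi$:
\begin{equation*}
p_\theta(\bY \mid \bX, \fm) = \mathbb{E}_{q_\phi(\bZ \mid \bX)}\!\left[\frac{p_\theta(\bY \mid \bX, \fm(\bZ))\, p(\bZ)}{q_\phi(\bZ \mid \bX)}\right].
\end{equation*}

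Next, I would take the logarithm of both sides and apply Jensen's inequality, using the concavity of $\log$ to pull it inside the expectation. Splitting the resulting log of a product into a sum of logs, the first term yields $\mathbb{E}_{q_\phi(\bZ \mid \bX)}[\log p_\theta(\bY \mid \bX, \fm(\bZ))]$, the expected reconstruction-style forecasting log-likelihood, while the remaining term $\mathbb{E}_{q_\phi(\bZ \mid \bX)}\left[\log \frac{p(\bZ)}{q_\phi(\bZ \mid \bX)}\right]$ is recognized as $-\mathrm{KL}(q_\phi(\bZ\mid\bX)\,\|\,p(\bZ))$ by definition of the Kullback-Leibler divergence. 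Assembling these two pieces gives exactly the claimed bound.

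There is no real obstacle here: the derivation is essentially a verification that the frozen nondifferentiable use of $\fm$ inside the likelihood does not interfere with the ELBO argument. The only thing worth being careful about is making the conditional independence structure explicit at the start, namely that given $\bX$ and $\bZ$, the target $\bY$ depends on $\bZ$ only through $\fm(\bZ)$, which justifies the factorization $p_\theta(\bY, \bZ \mid \bX, \fm) = p_\theta(\bY \mid \bX, \fm(\bZ))\, p(\bZ)$ used in the first step.
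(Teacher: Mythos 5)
Your proposal is correct and follows essentially the same route as the paper's proof in the appendix: write the marginal likelihood as an integral over $\bZ$ against the prior, multiply and divide by $q_\phi(\bZ\mid\bX)$ to form an expectation, apply Jensen's inequality to the $\log$, and split the resulting expectation into the reconstruction term and the negative KL term. Your added remark that conditioning on $\fm(\bZ)$ rather than $\bZ$ is harmless because $\fm$ is deterministic and frozen is a useful clarification but does not change the argument.
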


The derivation of this lower bound and a discussion on the implications of each term of the loss are deferred to~\cref{appendix:vae_proof}.

\begin{remark}
In practice, we use the Gaussian likelihood as our likelihood model: $p_\theta(\bY|\bX, \fm(\bZ)) = \mathcal{N}(\bY; \hat{\bY}, \sigma^2 \mathbf{I})$, with $\hat{\bY} = \dec_\theta(\fm(\bZ))$. In this case the forecasting loss term boils down to the MSE objective in \cref{eq:prob1} up to a multiplicative and additive noise-related constants: $\log \mathcal{N}(\bY; \hat{\bY}, \sigma^2 \mathbf{I}) = -\frac{1}{2\sigma^2} \| \bY - \hat{\bY} \|^2_{\mathrm{F}} - \frac{H D}{2} \log(2\pi\sigma^2)$. Notice that one can also learn a model of the noise where $ \dec_\theta(\fm(\bZ)) = [\mu_\theta(\bY|\bX, \fm(\bZ)), \sigma_\theta(\bY|\bX, \fm(\bZ))]$.
\end{remark}

\begin{remark}
The $\mathrm{KL}$ divergence regularization term can be multiplied by a scaling factor $\beta$ to control the disentanglement—independence of the latent representation components. This results in $\beta$-\vae~\citep{higgins2017betavae}, which is what we use in practice while referring to it as the \vae adapter throughout the paper. 
\end{remark}

\noindent\textbf{Dropout as approximate VI.} Dropout~\citep{srivastava14a} can be interpreted as a form of variational inference, where a variational distribution is imposed over the weights of a neural network \citep{Gal16}. Specifically, applying dropout during training corresponds to approximating a posterior over the weights using a Bernoulli distribution. This perspective allows the deterministic models introduced in \cref{subsec:families}, such as Linear AutoEncoders, to be transformed into probabilistic models by introducing stochasticity through dropout. 

% By treating dropout as a variational approximation, these models can capture uncertainty in their predictions while preserving their original structure, enabling a natural extension from deterministic to probabilistic inference frameworks.

Treating adapters in a Bayesian manner while keeping the FM fixed aligns with the concept of partially stochastic Bayesian neural networks, which provides theoretical guarantees on universal conditional density estimation \citep{Sharma23}. This framework ensures that the model can approximate any conditional density, provided that stochasticity is introduced early enough in the architecture and that the number of stochastic units matches or exceeds the output dimension. Using probabilistic adapters, We comply with these conditions by making the encoder stochastic, allowing the learned latent space to capture uncertainty while leveraging the FM’s fixed parameters.

% \begin{definition}
% \label{def:inj}
% A function $f:X \to Y$ is injective if for any $x,y\in X$ different, $f(x)\ne f(y)$.
% \end{definition}
% Using \cref{def:inj} we immediate get the following result:
% \begin{proposition}
% If $f$ is injective mapping a set $X$ to another set $Y$, 
% the cardinality of $Y$ is at least as large as that of $X$
% \end{proposition}
% \begin{proof} 
% Left as an exercise to the reader. 
% \end{proof}
% \cref{lem:usefullemma} stated next will prove to be useful.
% \begin{lemma}
% \label{lem:usefullemma}
% For any $f:X \to Y$ and $g:Y\to Z$ injective functions, $f \circ g$ is injective.
% \end{lemma}
% \begin{theorem}
% \label{thm:bigtheorem}
% If $f:X\to Y$ is bijective, the cardinality of $X$ and $Y$ are the same.
% \end{theorem}
% An easy corollary of \cref{thm:bigtheorem} is the following:
% \begin{corollary}
% If $f:X\to Y$ is bijective, 
% the cardinality of $X$ is at least as large as that of $Y$.
% \end{corollary}
% \begin{assumption}
% The set $X$ is finite.
% \label{ass:xfinite}
% \end{assumption}
% \begin{remark}
% According to some, it is only the finite case (cf. \cref{ass:xfinite}) that is interesting.
% \end{remark}

\begin{table*}[t]
    \centering
    \begin{tabular}[t]{ccccccccc}
        \toprule[\thick pt]
        \multirow{2}{*}{Dataset} & \multirow{2}{*}{H} & \multicolumn{1}{c}{No adapter} & \multicolumn{5}{c}{with adapter} \\
        \cmidrule(lr){3-3} 
        \cmidrule(lr){4-8}
         & & \moment & \pca & $\texttt{LinearAE}$ & $\texttt{dropoutLAE}$ & $\texttt{LinearVAE}$ & \vae \\
        \midrule[\thick pt]
        \multirow{2}{*}{ETTh1} & 96 & $0.411_{\pm 0.012}$ & $0.433_{\pm 0.001}$ & $0.402_{\pm 0.002}$ & $\mathbf{0.395_{\pm 0.003}}$ & $0.400_{\pm 0.001}$ & $0.404_{\pm 0.001}$ \\
         & 192 & $\mathbf{0.431_{\pm 0.001}}$ & $0.440_{\pm 0.000}$  & $0.452_{\pm 0.002}$ & $0.446_{\pm 0.001}$ & $0.448_{\pm 0.002}$ & $\mathbf{0.431_{\pm 0.001}}$  \\
        \cmidrule(lr){1-8}
        \multirow{2}{*}{Illness} & 24 & $2.902_{\pm 0.023}$ & $2.98_{\pm 0.001}$ & $2.624_{\pm 0.035}$ & $2.76_{\pm 0.061}$ & $2.542_{\pm 0.036}$ & $\mathbf{2.461_{\pm 0.008}}$ \\
         & 60  & $3.000_{\pm 0.004}$ & $3.079_{\pm 0.000}$ & $3.110_{\pm 0.127}$ & $2.794_{\pm 0.015}$ & $\mathbf{2.752_{\pm 0.040}}$ & $2.960_{\pm 0.092}$\\
         \cmidrule(lr){1-8}
        \multirow{2}{*}{Weather} & 96 & $0.177_{\pm 0.010}$ & $0.176_{\pm 0.000}$ & $0.169_{\pm 0.000}$ & $\mathbf{0.156_{\pm 0.001}}$ & $0.161_{\pm 0.001}$ & $0.187_{\pm 0.001}$ \\
         & 192 & $0.202_{\pm 0.000}$ & $0.208_{\pm 0.001}$ & $\mathbf{0.198_{\pm 0.001}}$ & $0.200_{\pm 0.001}$ & $0.204_{\pm 0.000}$ & $0.226_{\pm 0.000}$  \\
        \cmidrule(lr){1-8}
        \multirow{2}{*}{ExchangeRate} & 96 & $\mathbf{0.130_{\pm 0.011}}$ & $0.147_{\pm 0.000}$ & $0.167_{\pm 0.013}$ & $\mathbf{0.130_{\pm 0.011}}$ & $0.243_{\pm 0.039}$ & $0.455_{\pm 0.010}$ \\
         & 192 & $\mathbf{0.210_{\pm 0.002}}$ & $0.222_{\pm 0.000}$ & $0.304_{\pm 0.005}$ & $0.305_{\pm 0.013}$ & $0.457_{\pm 0.020}$ & $0.607_{\pm 0.021}$ \\
        \bottomrule[\thick pt]
    \end{tabular}
    \caption{Performance comparison between the baseline $\moment$ model without adapters against different adapter architectures (\pca, $\texttt{LinearAE}$, $\texttt{dropoutLinearAE}$, $\texttt{LinearVAE}$, and \vae), for multivariate long-term forecasting with different horizons $H$. We display the average test MSE $\pm$ standard error obtained on $3$ runs with different seeds. \textbf{Best} results are in bold, with lower values indicating better performance.}
    \label{table:results}
\end{table*}

\section{Experiments \& Results}
\label{sec:exps}

In this section, we empirically demonstrate the quantitative and qualitative superiority of \adapts in multivariate long-term time series forecasting on common benchmarks. We show that in most of the considered tasks (datasets and forecasting horizons,) our framework improves the performance of \moment, a commonly used time series forecasting foundation model. The implementation details are provided in~\cref{appendix:implem}.

\subsection{Time series forecasting}

\noindent\textbf{Datasets.} Our experiments are conducted on four publicly available real-world multivariate time series datasets, commonly used for long-term forecasting \citep{ilbert2024samformer, wu2021autoformer, chen2023tsmixer, nie2023a, Zeng2022AreTE}. These datasets include the Electricity Transformer Temperature dataset (ETTh1) \citep{zhou2021informer}, ExchangeRate \citep{lai2018modelinglongshorttermtemporal}, Weather \citep{weather}, and Influenza-like Illness \citep{illness}. All time series are segmented with an input length of $L = 512$, prediction horizons $H \in [96, 192]$ and $H \in [24, 60]$ for the Illness dataset, and a stride of 1, meaning each subsequent window is shifted by one step. These datasets (detailed in \cref{appendix:datasets}) originate from various application domains, enabling a comprehensive evaluation of our framework across diverse real-world scenarios.

\noindent\textbf{Baseline.} We compare our method against the vanilla application of the foundation model \moment$_{\text{small}}$ from the $\moment$ family of models \citep{goswami2024moment}. This means that for each dataset, we apply \moment$_{\text{small}}$ independently to each feature. Additionally, we compare our learning-based adapters against \pca, an adapter that has been used in the literature for model-based reinforcement learning \citep{benechehab2025zeroshot} and time series classification \citep{feofanov2024adapters}.

% \noindent\textbf{Evaluation.} All adapters are trained to minimize the MSE loss defined in \cref{eq:prob1}, following the training procedure in \cref{alg:training}. The average MSE on the test set, together with the standard error over 5 runs with different seeds is reported. Additional details and results, including the Mean Absolute Error (MAE), can be found in \abdelhakim{[cref]}. Except specified otherwise, all other results are also obtained over 5 runs with different seeds.

\begin{figure}[ht]
\centering
\includegraphics[width=\columnwidth]{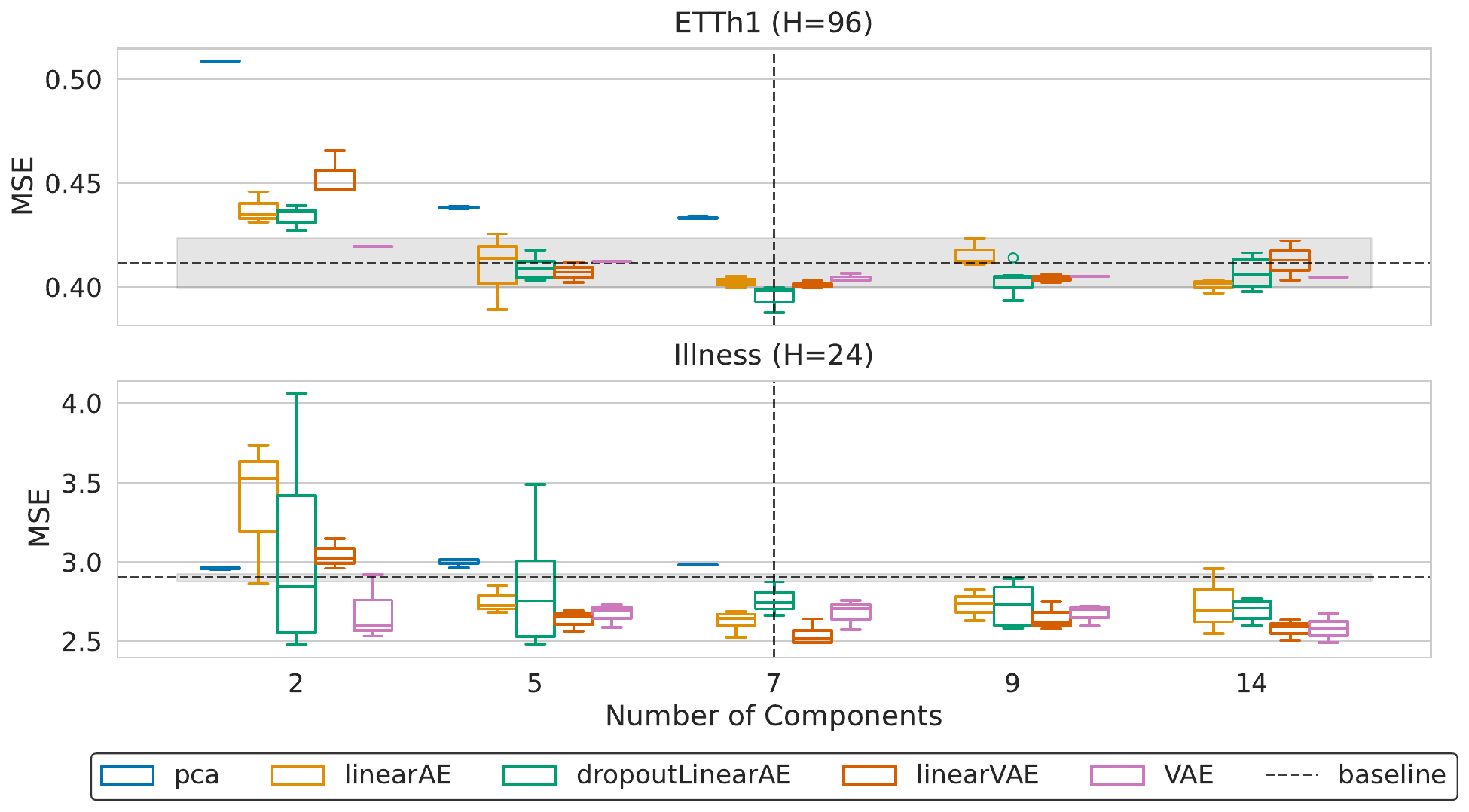}
\caption{Impact of the number of components on model performance. The dashed line indicates $\moment$ performance without adapters, the shaded area its standard deviation, and the vertical line the number of original features.}
\label{fig:dimred}
\end{figure}

\begin{figure*}[t]
% \vskip 0.2in
\centering
\begin{subfigure}[b]{0.24\textwidth}
         \centering
 \includegraphics[width=\textwidth]{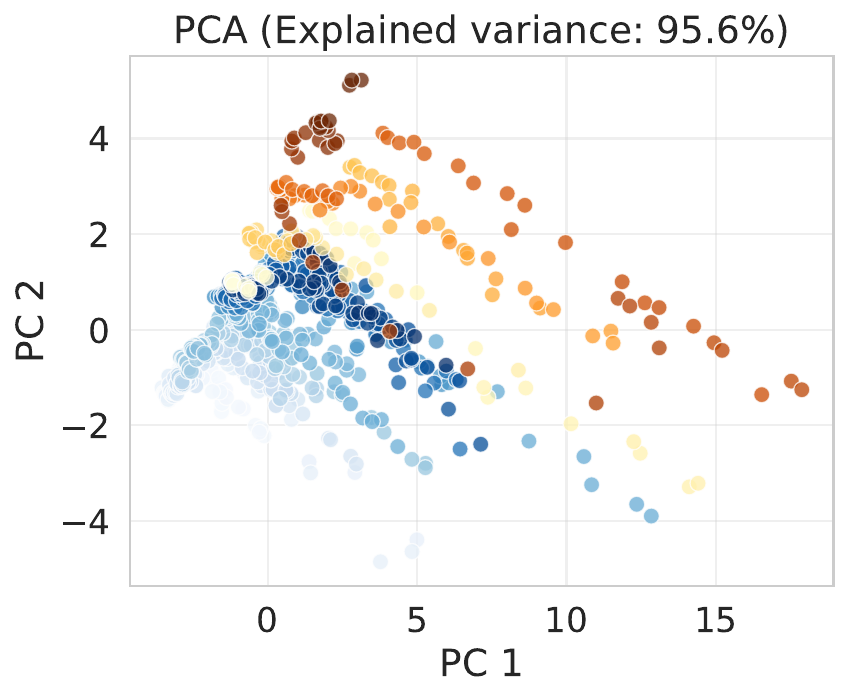}
 \label{fig:subfig_pca}
\end{subfigure}
\hfill
\begin{subfigure}[b]{0.24\textwidth}
 \centering
 \includegraphics[width=\textwidth]{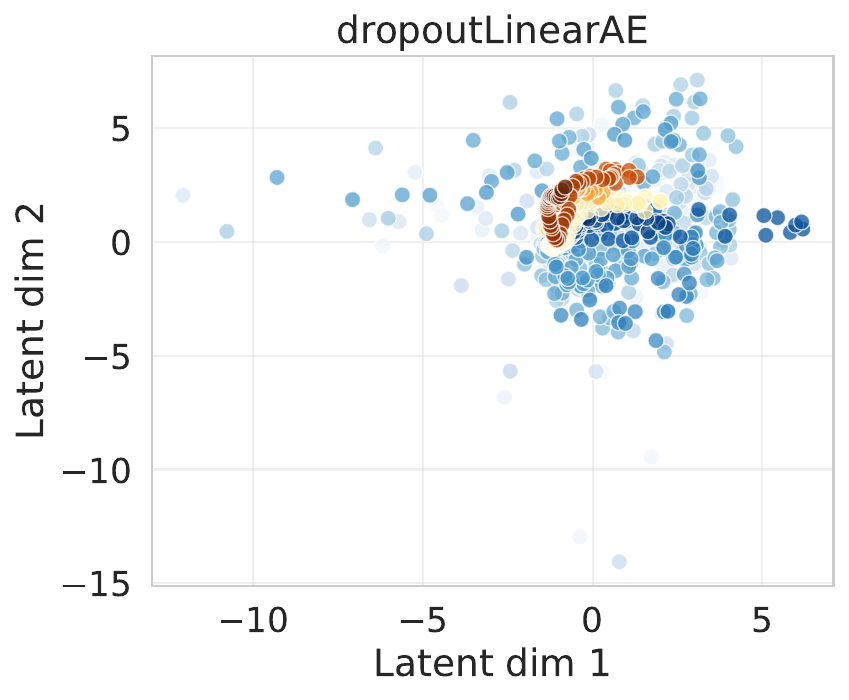}
 \label{fig:subfig_dropout}
\end{subfigure}
\hfill
\begin{subfigure}[b]{0.24\textwidth}
 \centering
 \includegraphics[width=\textwidth]{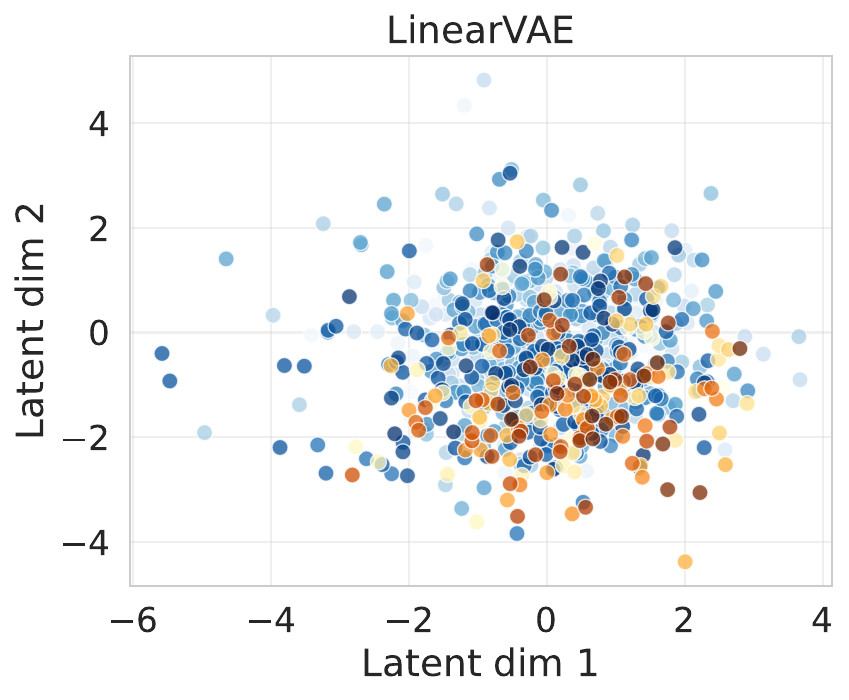}
 \label{fig:subfig_linearvae}
\end{subfigure}
\hfill
\begin{subfigure}[b]{0.24\textwidth}
 \centering
 \includegraphics[width=\textwidth]{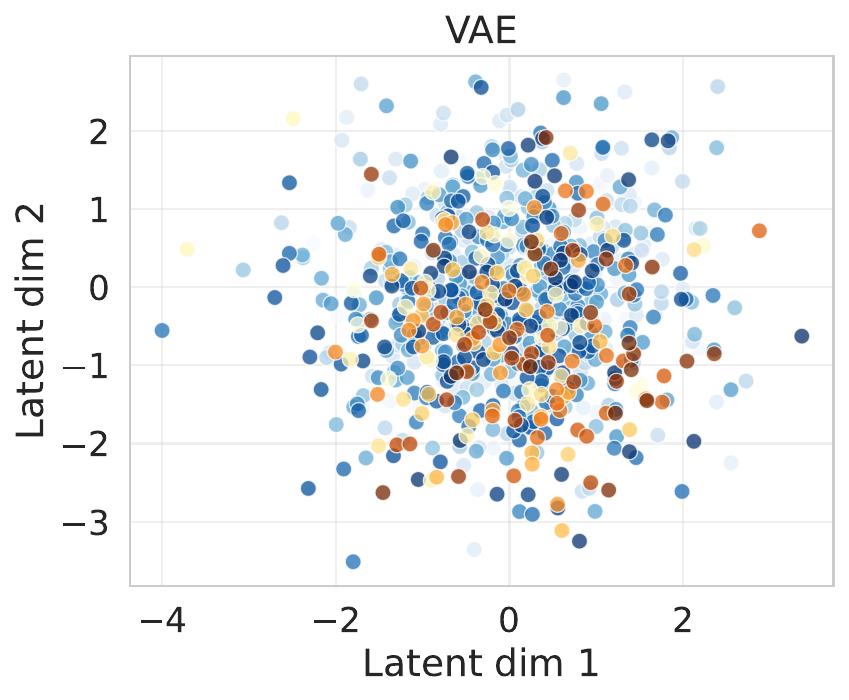}
 \label{fig:subfig_vae}
\end{subfigure}
\vfill
\vskip -0.1in
\begin{subfigure}[b]{0.5\textwidth}
 \centering
 \includegraphics[width=\textwidth]{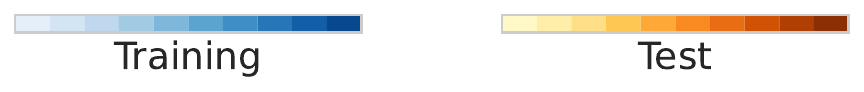}
\end{subfigure}
\caption{Visualization of the latent representation obtained by different adapters \rebuttal{(with number of components equal to 2)} on Illness($H=24$). Shaded colors indicate the time dimension, with lighter colors representing earlier timesteps.}
\label{fig:latent}
%\end{center}
% \vskip -0.2in
\end{figure*}

\noindent\textbf{\adapts improves the performance of \moment.} We present the forecasting error measured by the Mean Squared Error (MSE) in \cref{table:results} and the Mean Absolute Error (MAE) in \cref{appendix:results}. On the ETTh1 dataset with a prediction horizon $H=96$, all adapter-based variants outperform the baseline \moment model, with $\texttt{dropoutLinearAE}$ achieving the best performance, showing an 8\% improvement. Similar results are observed for the Illness dataset, where all adapters improve over the baseline. Notably, the $\texttt{VAE}$ achieves a significant 15\% improvement, reducing the MSE from $2.902$ to $2.461$ at $H=24$. In the Weather dataset, the $\texttt{dropoutLinearAE}$ adapter shows the best improvement across all adapter architectures for $H=96$, while its deterministic counterpart, $\texttt{LinearAE}$, takes the lead at $H=192$. The results on the ExchangeRate dataset are mixed, with some adapters matching the baseline performance ($\texttt{dropoutLinearAE}$ at $H=96$) while others show degraded performance, particularly at a longer prediction horizon ($H=192$), which is also observed for the ETTh1 dataset. Overall, \adapts improves the forecasting accuracy of \moment in 5 out of the 8 considered tasks, matches its performance in 2, and degrades performance in 1 task.

\subsection{Dimensionality Reduction}

\cref{fig:dimred} illustrates the impact of varying latent space dimensions on forecasting performance across different adapters. For the ETTh1 dataset with a $96$-step horizon, all adapter architectures achieve optimal performance at 7 components (matching the original feature count), with MSE values consistently lower than the baseline. Notably, at just 5 components, all adapters (except the \pca baseline) match the baseline score, demonstrating the suitability of our framework for low-resource setups through dimensionality reduction. The Illness dataset ($H=24$) presents more compelling results, as the \vae adapter achieves significantly optimal performance with only 2 components, underscoring the potential of our approach for cost-effective adaptation of time series foundation models. Ultimately, we find that expanding dimensionality beyond the original feature count does not yield further improvements, as no adapter shows notable enhancements past this point.

\subsection{Interpretability of the latent representations}

\cref{fig:latent} compares the representation learning capabilities of different adapters on the Illness($H=24$) dataset, focusing on their ability to distinguish between training and test data. To visualize the raw dataset, we employ PCA for dimensionality reduction, retaining only two principal components, which is justified by the 95.6\% explained variance. When representing the training and test datasets in the space of the first two principal components, we observe a clear distribution shift, potentially complicating the forecasting task for the baseline foundational model. In contrast, using \adapts results in well-overlapping Gaussian distributions for the training and test data in the latent space. This demonstrates our framework's ability to enforce a structured, isotropic representation that mitigates distribution shift. This effect is particularly pronounced with the \vae adapter and, to a lesser extent, with $\texttt{LinearVAE}$ and $\texttt{dropoutLinearAE}$.

The findings emphasize the advantages of \vae in managing distribution shift, a critical challenge in time series representation learning. By modeling uncertainty and enforcing a continuous latent space, \vae enhance generalization, making them especially valuable for real-world applications where test distributions differ from training data. This aligns with the objective of utilizing adapters in foundational models to optimize zero-shot performance, ensuring robustness across various tasks without extensive fine-tuning.

\subsection{On the calibration of the probabilistic adapters}

\begin{wrapfigure}{r}{0.2\textwidth}
  \begin{center}
  \vskip -0.1in
    \includegraphics[width=0.2\textwidth]{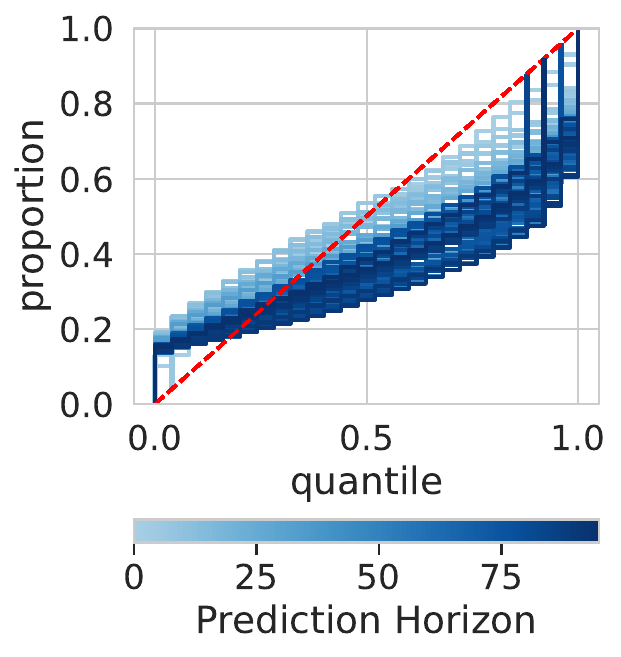}
  \end{center}
  \caption{Reliability diagram for the first feature of the ETTh1 ($H=96$) dataset using $\texttt{LinearVAE}$.}
  \label{fig:calibration}
\end{wrapfigure}

To evaluate the calibration of our adapter-based probabilistic forecasters, we use quantile calibration as depicted in the reliability diagram in \cref{fig:calibration}. In an ideal scenario, a well-calibrated probabilistic forecast should align with the red dashed diagonal, indicating that the empirical proportion of observations falls within the predicted quantiles at the expected rate. The overall conclusion is that we observe a gradual deviation from ideal calibration as the prediction horizon increases (darker shades). While early prediction horizons display reasonably well-calibrated predictions, longer-horizon forecasts systematically underestimate uncertainty, as shown by the curve falling below the diagonal. This indicates that observed values exceed predicted quantiles more frequently than expected, suggesting that the predictive distribution becomes too narrow, resulting in overconfident forecasts.

\subsection{Ablation studies}

\noindent\textbf{Influence of $\sigma$ and $\beta$ in the \vae Adapter.} \cref{fig:beta_sigma} illustrates an ablation study examining the $\beta$ parameter in $\beta$-\vae and the noise scale $\sigma$ of the likelihood model applied to the prediction $\hat{\bY}$, assessing their effects on MSE and Expected Calibration Error (ECE). The MSE heatmap (left) demonstrates that increasing $\beta$ generally diminishes MSE, with the lowest values observed at $\beta = 2.0$ and $\beta = 4.0$, particularly for higher $\log \sigma^2$. This indicates that stronger regularization through $\beta$ can enhance forecasting accuracy, possibly due to the disentangling effect of regularization towards a prior distribution with statistically independent components. Conversely, the ECE heatmap (right) shows that higher $\beta$ and $\log \sigma^2$ values result in lower calibration error, with optimal results at $\beta = 4.0$ and $\log \sigma^2 = 3.0$. This outcome is anticipated, as larger values of $\beta$ and $\sigma$ mitigate overfitting, where the model tends to exhibit overconfidence in its predictions. Additionally, it is observed that maintaining a fixed $\sigma$ during training generally outperforms including it in the optimization loop, a configuration denoted as \emph{auto} in \cref{fig:beta_sigma}.

\begin{figure}[ht]
\centering
\includegraphics[width=\columnwidth]{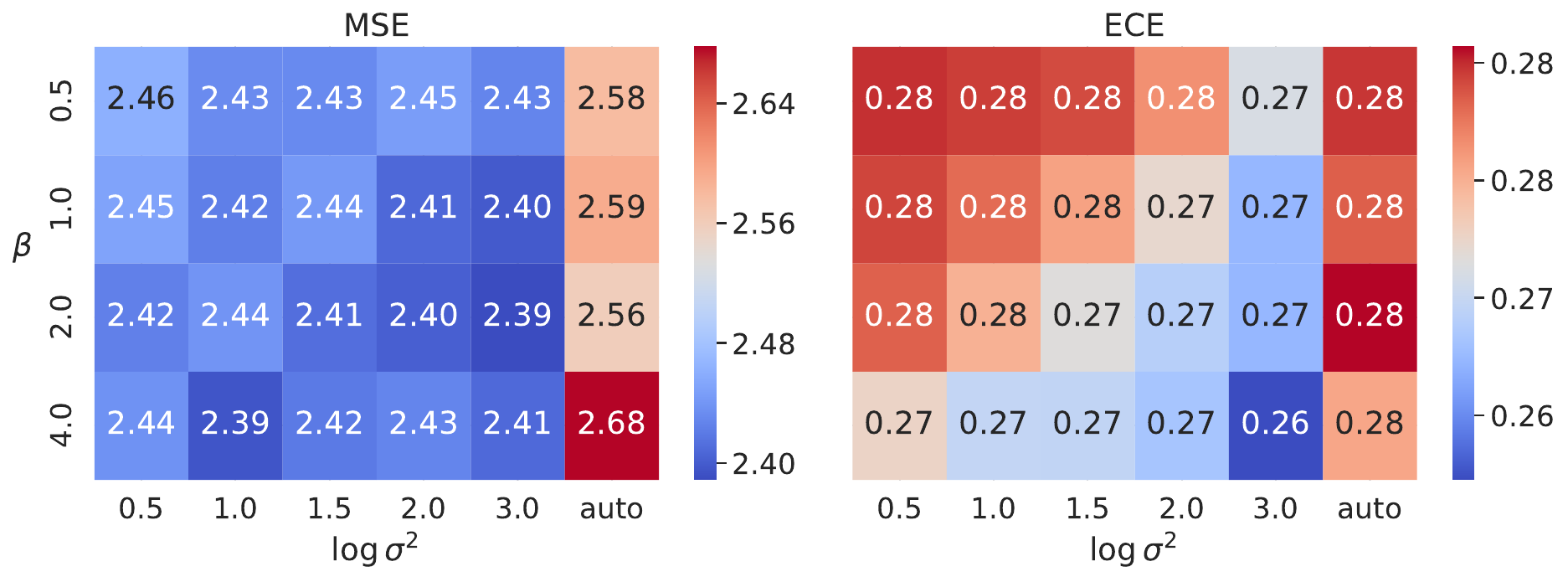}
\caption{$\beta$ and $\log \sigma^2$ \vae hyperparameters ablation on the Illness($H=24$) dataset. For reference, the \moment baseline score on this task is $2.902_{\pm 0.023}$.}
\label{fig:beta_sigma}
\end{figure}

\noindent\textbf{$\texttt{LinearAE}$ components.} The ablation study presented in \cref{fig:enc_dec} examines the performance of different components of the linear autoencoder adapter ($\texttt{LinearAE}$) across three datasets: ETTh1, Weather, and ExchangeRate. The figure compares the full linear autoencoder with its encoder-only ($\texttt{LinearEncoder}$) and decoder-only ($\texttt{LinearDecoder}$) variants. Overall, the results reveal that the decoder component of the linear autoencoder plays the most important role in minimizing the forecasting error across all datasets. The encoder-only variant's contribution varies, being more impactful in the Weather dataset compared to ETTh1 and ExchangeRate. These findings highlight the significance of the decoder in the $\texttt{LinearAE}$ adapter and suggest that, in the deterministic case, a decoder might be sufficient to capture feature dependencies. 

\begin{figure}[ht]
\centering
\includegraphics[width=\columnwidth]{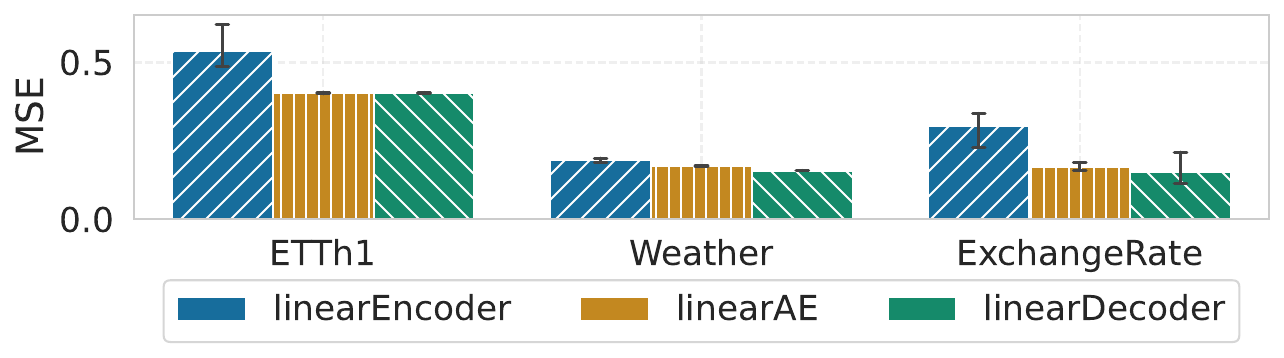}
\caption{LinearAE components ablation.}
\label{fig:enc_dec}
\end{figure}

Nevertheless, as shown in our previous experiments, particularly \cref{table:results}, probabilistic adapters generally outperformed the deterministic ones. This underscores the importance of the encoder as well, which is responsible for approximating the posterior distribution in the latent space—a mechanism inherent to our probabilistic framework.

\section{Conclusion}
\label{sec:discussion}

In this paper, we investigate how pre-trained univariate time series foundation models can be adapted for probabilistic multivariate forecasting. To address this challenge, we introduce the \adapts framework. Our method offers a novel approach to training feature space transformations that facilitate uncertainty quantification and enhance the performance of baseline foundation models. Through a series of experiments, we demonstrate that our framework improves forecasting accuracy, provides reasonably well-calibrated uncertainty estimates, reduces inference cost through dimensionality reduction, and offers interpretable feature space latent representations.

\noindent\textbf{Limitations \& Future directions.} Our work establishes a principled framework for adapting pre-trained univariate foundation models to multivariate probabilistic time series forecasting. 
While we focus on \moment, our approach can be applied on other univariate deterministic FMs; we leave this direction to future work.
Additionally, while variational inference is chosen for its efficiency, exploring alternative methods like Markov Chain Monte Carlo could improve uncertainty estimation, albeit at a higher computational cost.

Another promising direction is refining the calibration of our uncertainty estimates. While our framework flexibly extracts predictive uncertainty, investigating theoretical guarantees and recalibration techniques could further enhance reliability.

Overall, our work lays a strong foundation for efficiently adapting FMs, extending their applicability to broader forecasting challenges while preserving their expressive power.

% \section*{Impact statement}
% This paper presents work whose goal is to advance the field of Machine Learning. 
% There are many potential societal consequences of our work, none which we feel must be specifically highlighted here.

\section*{Reproducibility Statement}
In order to ensure reproducibility we will release the code at \href{https://github.com/abenechehab/AdaPTS}{https://github.com/abenechehab/AdaPTS}, once the paper has been accepted. The implementation details and hyperparameters are listed in \cref{appendix:implem}.

\bibliography{main}
\bibliographystyle{main}

%%%%%%%%%%%%%%%%%%%%%%%%%%%%%%%%%%%%%%%%%%%%%%%%%%%%%%%%%%%%%%%%%%%%%%%%%%%%%%%
%%%%%%%%%%%%%%%%%%%%%%%%%%%%%%%%%%%%%%%%%%%%%%%%%%%%%%%%%%%%%%%%%%%%%%%%%%%%%%%
% APPENDIX
%%%%%%%%%%%%%%%%%%%%%%%%%%%%%%%%%%%%%%%%%%%%%%%%%%%%%%%%%%%%%%%%%%%%%%%%%%%%%%%
%%%%%%%%%%%%%%%%%%%%%%%%%%%%%%%%%%%%%%%%%%%%%%%%%%%%%%%%%%%%%%%%%%%%%%%%%%%%%%%
\newpage
\appendix
\onecolumn

\textbf{\LARGE Appendix}

\paragraph{Outline.} In~\cref{appendix:theory}, we provide the proofs and a discussion on \cref{prop:solution} and \cref{prop:vae}. We then provide a perspective on Normalizing Flows as adapters in \cref{appendix:flows}. The experimental setup is presented in \cref{appendix:exp_setup}, including all the implementation details in \cref{appendix:implem}. Finally we showcase some additional results in \cref{appendix:results}.  

% Make appear only appendix sections in table of content
\addtocontents{toc}{\protect\setcounter{tocdepth}{2}}

% Change title of table of contents
\renewcommand*\contentsname{\Large Table of Contents}

%\setstretch{1.5} % stretch for table of contents
\tableofcontents
%\noindent\hrulefill
%\setstretch{1} % unstretch for the rest
\clearpage

\section{Theoretical analysis}
\label{appendix:theory}

\subsection{Proof of \cref{prop:solution}}
\label{appendix:solution_proof}
We begin by restating the proposition, and its underlying assumptions:

\begin{assumption}
$\bW_{\varphi}$ has full rank: $\text{rank}(\bW_{\varphi}) = D$, insuring its invertibility.
\end{assumption}

\begin{assumption}
For ease of derivation, we consider a similar linear parametrization for the foundation model: $\fm(\bX) =  \bW_{\text{FM}}^\top \bX + \vb_{\text{FM}} \mbf{1}^\top$ where $\bW_{\text{FM}} \in \RR^{L \times H}$, $\vb_{\text{FM}} \in \RR^H$, and $\mbf{1}$ a vector of ones of dimension $D$.
\end{assumption}

\begin{proposition}[Optimal linear adapter]
Under \cref{ass:invertible} and \cref{ass:fm}, the \emph{closed-form} solution of the problem: 

\begin{equation}
    \Loss (\bW_{\varphi}) = \|\bY - \big( \bW_{\text{FM}}^\top \bX \bW_{\varphi} + \vb_{\text{FM}} \mbf{1}^\top \big) \bW_{\varphi}^{-1}\|_F^2
\end{equation}

writes as:

\begin{equation}
    \bW_{\varphi}^* = (\bB^\top \bA)^{+} \bB^\top \bB
\end{equation}

where $\bW_{\varphi}^* = \argmin_{\bW_{\varphi} \in \mathcal{GL}_D(\RR)} \Loss (\bW_{\varphi})$, $\bA = \bY -  \bW_{FM}^\top \bX$, $\bB = \vb_{FM} \mbf{1}^\top$, and $(\bB^{\top} \bA)^{+}$ denoting the \emph{pseudo-inverse} operator.

\end{proposition}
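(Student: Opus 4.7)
The plan is to reduce the problem to a plain linear least-squares one in the variable $\bU = \bW_\varphi^{-1}$ and then invert back. First I would expand the objective. Because $\bW_{FM}^\top \bX \bW_\varphi \bW_\varphi^{-1} = \bW_{FM}^\top \bX$, the $\bW_{FM}^\top \bX$ term inside the parenthesis cancels the dependence on $\bW_\varphi$ outside, giving
\begin{equation*}
\Loss(\bW_\varphi) = \|\bA - \bB \bW_\varphi^{-1}\|_F^2,
\end{equation*}
with $\bA = \bY - \bW_{FM}^\top \bX$ and $\bB = \vb_{FM}\mbf{1}^\top$. This already reveals that only the decoder side (the inverse) truly matters, which is consistent with the ablation in \cref{fig:enc_dec}.

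Next I would compute $\nabla_{\bW_\varphi}\Loss$ via matrix differentials, using the standard identity $d(\bW_\varphi^{-1}) = -\bW_\varphi^{-1}\,d\bW_\varphi\,\bW_\varphi^{-1}$. Taking the differential of the expanded loss and rearranging under the trace gives
\begin{equation*}
d\Loss = 2\,\mathrm{tr}\bigl(\bW_\varphi^{-1}(\bA - \bB\bW_\varphi^{-1})^\top \bB\,\bW_\varphi^{-1}\, d\bW_\varphi\bigr).
\end{equation*}
Setting this to zero for all admissible perturbations $d\bW_\varphi$ and using Assumption~\ref{ass:invertible} (so that $\bW_\varphi^{-1}$ is full rank and can be cancelled on both sides) yields the normal equation
\begin{equation*}
\bB^\top \bB\, \bW_\varphi^{-1} = \bB^\top \bA.
\end{equation*}

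At this point I would solve the normal equation for $\bW_\varphi$. Formally one wants $\bW_\varphi = (\bB^\top \bA)^{-1}\bB^\top \bB$, but since $\bB = \vb_{FM}\mbf{1}^\top$ has rank $1$ both $\bB^\top \bB$ and $\bB^\top \bA$ are rank $1$, so no genuine inverse exists. The clean way to write the minimum-norm stationary point is therefore through the Moore--Penrose pseudo-inverse, yielding
\begin{equation*}
\bW_\varphi^* = (\bB^\top \bA)^{+} \bB^\top \bB,
\end{equation*}
which is the claimed formula. I would close by remarking that this is a valid minimiser because the original loss is convex in $\bU = \bW_\varphi^{-1}$ (it is a quadratic with PSD Hessian $\bB^\top\bB$), and the reparametrisation is a smooth bijection on $\mathcal{GL}_D(\RR)$.

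The main obstacle is precisely this rank-deficiency: the stationarity equation does not admit a unique solution, so one must be careful in justifying why the pseudo-inverse expression is the natural representative (and to keep track of the fact that the proposition's formula is a particular minimum-norm choice among the equivalence class of minimisers). The remark following the proposition in the paper confirms this reading by introducing the Tikhonov regularisation $\lambda \bI$ that makes the inverse well-defined in practice. A secondary subtlety is that the feasible set $\mathcal{GL}_D(\RR)$ is open, so one should also argue that the infimum is attained at an invertible matrix, e.g.\ by perturbing $\bW_\varphi^*$ slightly and taking the regularisation parameter $\lambda \to 0$.
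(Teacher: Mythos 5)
Your proof is correct and follows essentially the same route as the paper's: reduce to $\|\bA - \bB\bW_\varphi^{-1}\|_F^2$, differentiate (you via the differential identity $d(\bW_\varphi^{-1}) = -\bW_\varphi^{-1}\,d\bW_\varphi\,\bW_\varphi^{-1}$, the paper via the equivalent chain rule for the gradient), cancel the invertible $\bW_\varphi^{-1}$ factors to reach the normal equation $\bB^\top\bB\,\bW_\varphi^{-1} = \bB^\top\bA$, and express the stationary point with a Moore--Penrose pseudo-inverse. The only cosmetic difference is in the last step: the paper first right-multiplies the normal equation by $\bW_\varphi$ to get $\bB^\top\bA\,\bW_\varphi = \bB^\top\bB$ and then applies $(\bB^\top\bA)^{+}$ on the left, which is slightly cleaner than your "formally invert, then swap $^{-1}$ for $^{+}$" shortcut; your closing discussion of the rank-$1$ degeneracy and the open feasible set $\mathcal{GL}_D(\RR)$ correctly identifies the same subtleties the paper relegates to the remark about $\lambda\mbf{I}$ regularization.
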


\begin{proof}

We begin by expanding the loss function:
\begin{align*}
    \Loss (\bW_{\varphi}) &= \|\bY - (\bW_{\text{FM}}^\top \bX \bW_{\varphi} + \vb_{\text{FM}} \mbf{1}^\top) \bW_{\varphi}^{-1} \|_F^2 \\
    &= \| \bA - \bB \bW_{\varphi}^{-1} \|_F^2
\end{align*}

where $\bA = \bY -  \bW_{FM}^\top \bX$ and $\bB = \vb_{FM} \mbf{1}^\top$. Expanding the Frobenius norm:
\begin{equation*}
    \Loss (\bW_{\varphi}) = \text{Tr} \left( (\bA - \bB \bW_{\varphi}^{-1})^\top (\bA - \bB \bW_{\varphi}^{-1}) \right)
\end{equation*}

Taking the gradient with respect to $\bW_{\varphi}^{-1}$ yields:

\begin{equation*}
    \frac{\partial \Loss}{\partial \bW_{\varphi}^{-1}} = - 2 \bB^\top \bA + 2 \bB^\top \bB \bW_{\varphi}^{-1}
\end{equation*}

Knowing that $\bW_{\varphi}$ is invertible, We have that: $ \frac{\partial \Loss}{\partial \bW_{\varphi}} =  - \bW_{\varphi}^{-\top} \frac{\partial \Loss}{\partial \bW_{\varphi}^{-1}} \bW_{\varphi}^{-\top}$

hence

\begin{equation*}
    \frac{\partial \Loss}{\partial \bW_{\varphi}} = -2 \bW_{\varphi}^{-T} \left( \bB^\top \bA - \bB^\top \bB \bW_{\varphi}^{-1} \right) \bW_{\varphi}^{-T}.
\end{equation*}

Setting $\frac{\partial \Loss}{\partial \bW_{\varphi}} = 0$ and multiplying both sides by $\bW_{\varphi}^\top$, we obtain:
\begin{equation*}
    \bB^\top \bA = \bB^\top \bB \bW_{\varphi}^{-1}.
\end{equation*}
Multiplying both sides by $\bW_{\varphi}$:
\begin{equation*}
     \bB^\top \bA \bW_{\varphi} = \bB^\top \bB.
\end{equation*}
Finally applying the pseudo-inverse to solve for $\bW_{\varphi}$ gives our final result:
\begin{equation*}
    \bW_{\varphi}^* = (\bB^\top \bA)^{+} \bB^\top \bB.
\end{equation*}

Given the convexity of $\Loss (\bW_{\varphi})$ (which follows from the convexity of the Frobenius norm $\norm{\cdot}_{\mathrm{F}}^2$, the inverse operation, and an affine transformation), we conclude that $\bW_{\varphi}^*$ is a global solution for \cref{eq:prob2}.

\begin{remark}
    We make use of the pseudo-inverse due to the current construction of the matrix $\bB$ (with identical rows) which implies that the product $\bB^\top \bA$ is degenerate. To bypass this limitation and further ensure the invertibility of $\bW_{\varphi}^*$, we can revisit the definition of the foundation model in \cref{ass:fm} to include channel dependent biases and ensure a full rank matrix $\bB$.
\end{remark}

\end{proof}
    
\subsection{Proof of \cref{prop:vae}}
\label{appendix:vae_proof}

To derive the evidence lower bound (ELBO) used in the training objective of the VAE adapter, we start from the marginal likelihood of the observed data $\bY$ given the inputs $\bX$ and foundation model $\fm$. The marginal likelihood is expressed as:
\begin{equation}
\log p_\theta(\bY|\bX, \fm) = \log \int p_\theta(\bY|\bX, \fm(\bZ)) p(\bZ) \, d\bZ,
\end{equation}
where $\bZ$ is the latent variable, $p_\theta(\bY|\bX, \fm(\bZ))$ is the likelihood model parameterized by $\theta$, and $p(\bZ)$ is the prior distribution over the latent variable $\bZ$.

Direct optimization of this marginal likelihood is generally intractable due to the integration over $\bZ$. To make this optimization feasible, we introduce a variational distribution $q_\phi(\bZ|\bX)$, parameterized by $\phi$, as an approximation to the true posterior $p_\theta(\bZ|\bX, \bY, \fm)$. Using $q_\phi(\bZ|\bX)$, we can reformulate the log-marginal likelihood as follows:
\begin{align}
\log p_\theta(\bY|\bX, \fm) 
&= \log \int q_\phi(\bZ|\bX) \frac{p_\theta(\bY|\bX, \fm(\bZ)) p(\bZ)}{q_\phi(\bZ|\bX)} \, d\bZ \\
&= \log \mathbb{E}_{q_\phi(\bZ|\bX)} \left[ \frac{p_\theta(\bY|\bX, \fm(\bZ)) p(\bZ)}{q_\phi(\bZ|\bX)} \right].
\end{align}

Using Jensen's inequality, we can derive a lower bound on this log-marginal likelihood:
\begin{align}
\log p_\theta(\bY|\bX, \fm) 
&\geq \mathbb{E}_{q_\phi(\bZ|\bX)} \left[ \log \frac{p_\theta(\bY|\bX, \fm(\bZ)) p(\bZ)}{q_\phi(\bZ|\bX)} \right] \\
&= \mathbb{E}_{q_\phi(\bZ|\bX)} \left[ \log p_\theta(\bY|\bX, \fm(\bZ)) \right] 
    - \mathbb{E}_{q_\phi(\bZ|\bX)} \left[ \log \frac{q_\phi(\bZ|\bX)}{p(\bZ)} \right].
\end{align}

The second term can be rewritten as the Kullback-Leibler (KL) divergence between the variational posterior $q_\phi(\bZ|\bX)$ and the prior $p(\bZ)$:
\begin{equation}
\mathrm{KL}\left(q_\phi(\bZ|\bX) \,\|\, p(\bZ)\right) = \mathbb{E}_{q_\phi(\bZ|\bX)} \left[ \log \frac{q_\phi(\bZ|\bX)}{p(\bZ)} \right].
\end{equation}

Substituting this into the inequality, we obtain the evidence lower bound (ELBO):
\begin{equation}
\log p_\theta(\bY|\bX, \fm) \geq \mathbb{E}_{q_\phi(\bZ|\bX)} \left[ \log p_\theta(\bY|\bX, \fm(\bZ)) \right] 
    - \mathrm{KL}\left(q_\phi(\bZ|\bX) \,\|\, p(\bZ)\right).
\end{equation}

The ELBO consists of two terms:
\begin{itemize}
    \item The \emph{forecasting} term, $\mathbb{E}_{q_\phi(\bZ|\bX)} \left[ \log p_\theta(\bY|\bX, \fm(\bZ)) \right]$, which measures how well the model can reconstruct $\bY$ given the latent variable $\bZ$.
    \item The \emph{regularization} term, $\mathrm{KL}\left(q_\phi(\bZ|\bX) \,\|\, p(\bZ)\right)$, which encourages the variational posterior to stay close to the prior distribution $p(\bZ)$.
\end{itemize}

Thus, maximizing the ELBO provides a tractable way to train the parameters $\theta$ and $\phi$ by optimizing the balance between forecasting accuracy and latent space regularization.
\qed

\section{Normalizing Flows}
\label{appendix:flows}

Normalizing Flows make use of invertible transformations to map a simple base distribution (e.g. Gaussian) to a complex data distribution. Each transformation $ T $ is designed to maintain invertibility and efficient Jacobian computation. The transformation is applied iteratively: $ \bZ = T_k \circ T_{k-1} \circ \dots \circ T_1(\bX) $. Current Normalizing Flow instantiations (e.g. \texttt{RealNVP}) make use of generic invertible transformations such as \emph{coupling flows}; the latters can be parametrized using a neural network leading to powerful non-linear generative models that are trained to maximize the data log-likelihood:
\[
\log p(\bX) = \log p(\bZ) + \sum_{i=1}^k \log \left| \det \frac{\partial T_i(\cdot;\theta)}{\partial \bZ_{i-1}} \right|
\]
where $\theta$ denote the parameters of the non-linear parametrization of the invertible transformations $T_i$, and $\bZ_{i-1}$ is the output of the transformation $T_{i-1}$.
In the context of time series adapters, we directly optimize the parameters of the transformations based on their direct and inverse application on the time series forecasting problem:

\begin{equation*}
\begin{split}
\Loss_{\text{flow}} = \| \bY - &T^{-1}_{1} \circ T^{-1}_{2} \circ \dots \circ T^{-1}_k( \\ &\fm \big( T_k \circ T_{k-1} \circ \dots \circ T_1(\bX; \theta) \big); \theta) \|_F^2
\end{split}
\end{equation*}

where the encoder is represented by the series of direct transformations: $\enc(\cdot) = T_k \circ T_{k-1} \circ \dots \circ T_1(\cdot; \theta)$, and respectively the decoder by the series of inverse transformations $\dec(\cdot) = T^{-1}_{1} \circ T^{-1}_{2} \circ \dots \circ T^{-1}_k(\cdot; \theta)$.

As defined here, Normalizing Flows suffer from the constraint of keeping the same dimension in both original and learned representation space. For this purpose, we investigate coupling a normalizing flow with a linear encoder-decoder type of architecture to enable dimensionality reduction prior to applying the transformation $T_i$. The parameters of the additional encoder and decoder are then jointly trained to optimize the learning objective $\Loss_{\text{flow}}$.

Given that the parameters of the encoder and the decoder are shared in Normalizing Flows, the gradient-based optimization within our framework receives conflicting directions due to gradient flow from both the direct and inverse transformations simultaneously. We discovered that this adapter construction was challenging to optimize in practice, and we defer the exploration of this direction to future research endeavors.

\section{Experimental setup}
\label{appendix:exp_setup}

\subsection{Datasets}
\label{appendix:datasets}

\begin{table}[htbp]
    \centering
    \caption{Characteristics of the multivariate time series datasets used in our experiments with various sizes and dimensions.}
    \label{tab:dataset_description}
    \scalebox{1}{
    \begin{tabular}{lcccc}
        \toprule
         Dataset & ETTh1 & Illness & ExchangeRate & Weather\\
         \midrule
         \# features & $7$ & $7$ & $8$ & $21$\\
         \# time steps & $13603$ & $169$ & $6791$ & $51899$ \\
         Granularity & 1 hour & 1 week & 1 day & 10 minutes \\
         (Train, Val, Test) & (8033, 2785, 2785) & (69, 2, 98)  & (4704, 665, 1422) & (36280, 5175, 10444) \\
         \bottomrule
    \end{tabular}
    }
\end{table}

\subsection{Implementation details}
\label{appendix:implem}

In this section, we describe the full \adapts framework, starting from the data preprocessing, the training algorithm, and the hyperparameters optimization.

\paragraph{Preprocessing.} Given that the adapter as defined in \cref{def:adapter} is a feature space transformation, we start by rescaling (\emph{StandardScaler} and \emph{MinMaxScaler}) the data where all the timesteps are regarded as data points. To account for the temporal specificities in each batch, we use Reversible Instance Normalization (RevIn) \citep{kim2022reversible} that has been proven to mitigate time-related distribution shifts in time series problems. Finally, and following the observation that $\pca$ when composed with a linear adapter showed the best result in the case of correlated data (\cref{fig:linear_synthetic}), we include the possibility of applying \emph{full-component} $\pca$ as part of our data pre-processing pipeline.

% We describe the adapter training loop in \cref{alg:training}.
\paragraph{Training parameters.} After the pre-processing phase, we proceed to split the data into a \emph{train-validation-test} sets, where the validation set serves as a tool to select the best hyperparameters for the adapter. The resulting adapter that is instantiated with the optimal hyperparameters is then tested against the unseen test dataset. For all of our experiments, we first train the linear forecasting head of \moment (referred to as \emph{Linear Probing} in \cite{goswami2024moment}) with the Adam optimizer \citep{kingma2017adammethodstochasticoptimization}, a batch size of $32$, a one cycle scheduler starting with $0.001$ as learning rate. Once the forecasting linear head is trained, we freeze its parameters and proceed to training the adapter. This is done using the Adam optimizer, a batch size of $32$, a reduce on plateau scheduler starting with $0.001$ as learning rate.

% \begin{algorithm}[ht]
%    \caption{Training a model using the multi-step loss}
%    \label{alg:training}
% \begin{algorithmic}
%    \STATE {\bfseries Input:} model $\hat{p}_\theta$, trajectory $\tau=(s_{t+i},a_{t+i})_{i=0}^h$, loss function $L$, horizon $h$, normalized weights $\{\alpha_i\}_{i=1}^h$
%    \STATE Initialize the loss $l = 0$
%    \STATE $s=s_t$
%    \FOR{$i=0$ {\bfseries to} $h-1$}
%    \STATE Sample from the model $\hat{s}_{t+i+1} \to \hat{p}_\theta(s, a_{t+i})$
%    \STATE Update the loss $l \mathrel{+}= \alpha_i \cdot L(\hat{s}_{t+i+1}, s_{t+i+1})$
%    \STATE $s=\hat{s}_{t+i+1}$
%    \ENDFOR
%    \STATE Update $\theta$ using a gradient step minimizing $l$
% \end{algorithmic}
% \end{algorithm}

\paragraph{Hyperparameter optimization.} In order to select the best hyperparameters for the adapter architecture we use \emph{Ray tune}~\citep{liaw2018tuneresearchplatformdistributed} with the Heteroscedastic and Evolutionary Bayesian Optimisation solver (HEBO) \citep{Cowen-Rivers2022-HEBO} engine, reporting the average mean squred error (MSE) from \emph{k-fold} cross validation. \cref{table:hypers} shows the default hyperparameters for each considered adapter.

\begin{table}[!ht]
  \scriptsize
  \caption{Adapters hyperparameters.}
  \label{table:hypers}
  \centering
  \begin{tabular}{lllll}
    \toprule
    adapter
    & \texttt{LinearAE}
    & \texttt{DropoutLinearAE}
    & \texttt{LinearVAE}
    & \texttt{VAE}
    \\
    \cmidrule(r){1-5}
    p dropout
    & $-$   
    & $0.1$ 
    & $-$ 
    & $-$ 
    \\
    Number of layers
    & $-$   
    & $-$ 
    & $-$ 
    & $2$ 
    \\
    Hidden dimension
    & $-$   
    & $-$ 
    & $-$ 
    & $128$
    \\
    $\beta$
    & $-$   
    & $-$ 
    & $0.5$
    & $0.5$
    \\
    $\sigma$
    & $-$   
    & $-$ 
    & $1.0$
    & $1.0$
    \\
    \bottomrule
  \end{tabular}
\end{table}

\section{Additional results}
\label{appendix:results}

\subsection{$\moment$ applied to synthetic data.}
\label{appendix:moment_synth}

To validate the adapter \emph{optimality} condition with large non-linear foundation models, we use $\moment$ \citep{goswami2024moment}. The optimal linear adapter in this case minimizes the following intractable objective:

\begin{equation}
\label{eq:moment_linear}
    \Loss (\bW_{\varphi}) = \|\bY - f_{\text{$\moment$}}\big(\bX \bW_{\varphi} \big) \bW_{\varphi}^{-1}\|_{\mathrm{F}}^2
\end{equation}

To approximately solve this optimization problem, we instantiate $\bW_{\varphi}$ as a single-linear-layer encoder denoted $\enc_\theta$, and respectively the inverse transformation $\bW_{\varphi}^{-1}$ as a single-linear-layer decoder denoted $\dec_\theta$. We then use gradient-based optimization of the parameters $\theta$ using the Adam optimizer, aiming at solving the following optimization problem:

\begin{equation}
\label{eq:theta}
    \theta^* = \argmin_\theta \|\bY - \dec_\theta \big( f_{\text{$\moment$}}(\enc_\theta (\bX)) \big)\|_{\mathrm{F}}^2
\end{equation}

\begin{figure}
     \centering
     \begin{subfigure}[b]{0.49\textwidth}
         \centering
        \includegraphics[width=\textwidth]{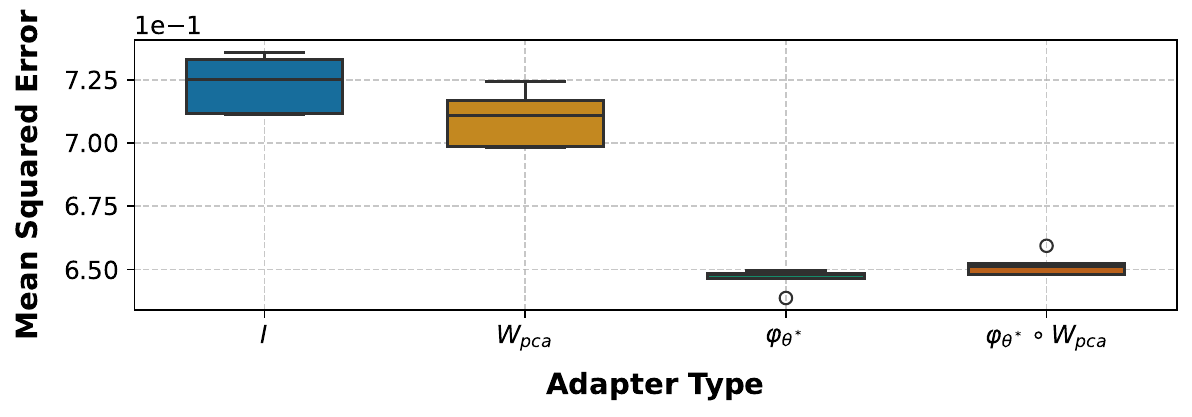}
         \caption{Independent}
         \label{fig:indp}
     \end{subfigure}
     \hfill
     \begin{subfigure}[b]{0.49\textwidth}
         \centering
        \includegraphics[width=\textwidth]{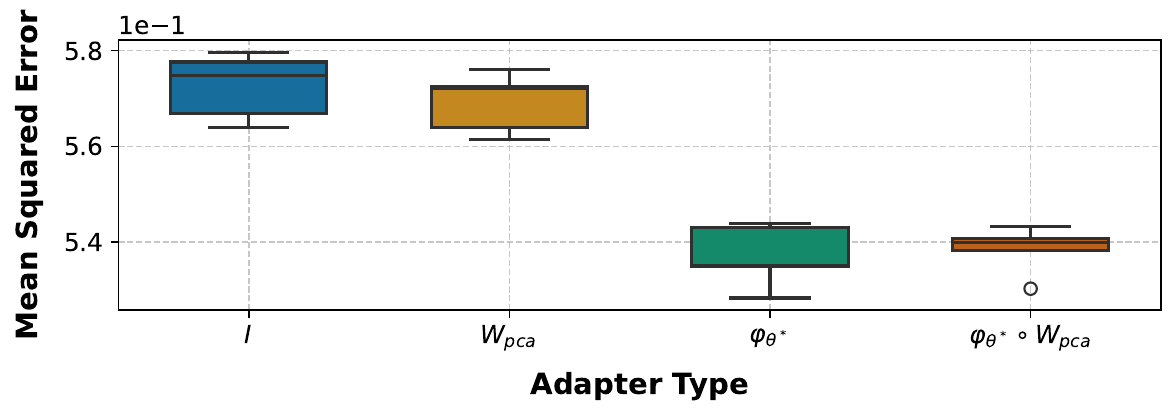}
         \caption{Correlated}
         \label{fig:corr}
     \end{subfigure}
        \caption{$\moment$ on simulated independent data.}
        \label{fig:moment_synthetic}
\end{figure}

\cref{fig:moment_synthetic} shows the performance gain obtained by optimizing a linear adapter on \emph{$\moment$-small} foundation model. Unlike the tractable case, we observe that in both data modalities (independent and correlated data), $\pca$ has little to no improvement over the identity baseline, while $\varphi_{\theta^*}$ reaches an order of magnitude better solution. This confirms our intuition about the existence of a better solution than the identity matrix, even in the case of real-world complex foundation models.

% \subsubsection{Physical system data}

% To further strengthen our findings, we apply the same protocol as before to dynamics trajectories simulated from a Markovian physical system: \emph{Pendulum}. \abdelhakim{[Introduce pendulum from gym, add figure, and comment]}. 

\subsection{Mean Absolute Error}

\begin{table*}[ht]
    \centering
    \begin{tabular}[t]{cccccccc}
        \toprule[\thick pt]
        \multirow{2}{*}{Dataset} & \multirow{2}{*}{H} & \multicolumn{1}{c}{No adapter} & \multicolumn{5}{c}{with adapter} \\
        \cmidrule(lr){3-3} 
        \cmidrule(lr){4-8}
         & & $\text{Moment}_{\text{small}}$ & pca & linear & dropout & linear VAE & VAE \\
        \midrule[\thick pt]
        \multirow{2}{*}{ETTh1} & 96 & $0.422_{\pm 0.006}$ & $0.440_{\pm 0.000}$ & $0.423_{\pm 0.003}$ & $\mathbf{0.415_{\pm 0.002}}$ & $0.420_{\pm 0.001}$ & $0.426_{\pm 0.001}$ \\
         & 192  &  $\mathbf{0.436_{\pm 0.000}}$ & $0.445_{\pm 0.000}$ & $0.449_{\pm 0.003}$ & $0.450_{\pm 0.001}$ & $0.451_{\pm 0.001}$ & $0.444_{\pm 0.001}$  \\
        \cmidrule(lr){1-8}
        \multirow{2}{*}{Illness} & 24 & $1.143_{\pm 0.007}$ & $1.163_{\pm 0.001}$ & $2.624_{\pm 0.035}$ & $1.156_{\pm 0.016}$ & $1.074_{\pm 0.011}$ & $\mathbf{1.057_{\pm 0.012}}$ \\
         & 60  & $1.149_{\pm 0.001}$ & $1.161_{\pm 0.001}$ & $1.227_{\pm 0.030}$ & $1.173_{\pm 0.015}$ & $1.112_{\pm 0.021}$ & $\mathbf{1.105_{\pm 0.021}}$ \\
         \cmidrule(lr){1-8}
        \multirow{2}{*}{Weather} & 96 & $0.232_{\pm 0.010}$ & $0.235_{\pm 0.000}$ & $0.226_{\pm 0.000}$ & $0.212_{\pm 0.001}$ & $\mathbf{0.218_{\pm 0.001}}$ & $0.243_{\pm 0.001}$ \\
         & 192 & $\mathbf{0.251_{\pm 0.001}}$ & $0.260_{\pm 0.001}$ & $\mathbf{0.251_{\pm 0.001}}$ & $\mathbf{0.251_{\pm 0.000}}$ & $0.255_{\pm 0.000}$ & $0.274_{\pm 0.000}$ \\
        \cmidrule(lr){1-8}
        \multirow{2}{*}{ExchangeRate} & 96 & $\mathbf{0.252_{\pm 0.010}}$ & $0.264_{\pm 0.000}$ & $0.308_{\pm 0.010}$ & $0.269_{\pm 0.012}$ & $0.376_{\pm 0.031}$ & $0.488_{\pm 0.003}$ \\
         & 192 & $\mathbf{0.329_{\pm 0.001}}$ & $0.335_{\pm 0.000}$ & $0.415_{\pm 0.002}$ & $0.419_{\pm 0.010}$ & $0.513_{\pm 0.010}$ & $0.585_{\pm 0.008}$ \\
        \bottomrule[\thick pt]
    \end{tabular}
    \caption{Performance comparison between the baseline $\moment$ model without adapters against different adapter architectures (\pca, $\texttt{LinearAE}$, $\texttt{dropoutLAE}$, $\texttt{LinearVAE}$, and \vae), for multivariate long-term forecasting with different horizons $H$. We display the average test MAE $\pm$ standard error obtained on $3$ runs with different seeds. \textbf{Best} results are in bold, with lower values indicating better performance.}
    \label{table:results_mae}
\end{table*}

% \begin{figure}[ht]
% \centering
% \includegraphics[width=0.5\textwidth]{figures/result_ETTh1.pdf}
% \caption{ETTh1.}
% \label{fig:etth1}
% \end{figure}

% \begin{figure}[ht]
% \centering
% \includegraphics[width=0.5\textwidth]{figures/result_ETTh2.pdf}
% \caption{ETTh2.}
% \label{fig:etth2}
% \end{figure}

% \begin{figure}[ht]
% \centering
% \includegraphics[width=0.5\textwidth]{figures/result_Weather.pdf}
% \caption{Weather.}
% \label{fig:weather}
% \end{figure}

% \begin{figure}[ht]
% \centering
% \includegraphics[width=0.5\textwidth]{figures/result_ExchangeRate.pdf}
% \caption{ExchangeRate.}
% \label{fig:ExchangeRate}
% \end{figure}

% \begin{figure}[ht]
% \centering
% \includegraphics[width=0.5\textwidth]{figures/result_Illness.pdf}
% \caption{Illness.}
% \label{fig:Illness}
% \end{figure}

%%%%%%%%%%%%%%%%%%%%%%%%%%%%%%%%%%%%%%%%%%%%%%%%%%%%%%%%%%%%%%%%%%%%%%%%%%%%%%%
%%%%%%%%%%%%%%%%%%%%%%%%%%%%%%%%%%%%%%%%%%%%%%%%%%%%%%%%%%%%%%%%%%%%%%%%%%%%%%%

\end{document}